\documentclass[11pt]{article}
\usepackage{fullpage}
\usepackage{xfrac}

\usepackage{tcolorbox}

\title{Boosting with List-Decodable Codes}
\author{}

\usepackage{liyang}

\allowdisplaybreaks

\usepackage{algorithm}
\usepackage{caption}
\usepackage[noend]{algpseudocode}

\usepackage{tikz}
\usepackage{pgfplots}
\usepackage{varwidth}

\usetikzlibrary{positioning, shapes.geometric, arrows.meta, calc}
\usetikzlibrary{decorations.pathreplacing}

\newcommand{\cA}{\mathcal{A}}
\newcommand{\cB}{\mathcal{B}}
\newcommand{\cC}{\mathcal{C}}
\newcommand{\cD}{\mathcal{D}}

\newcommand{\cF}{\mathcal{F}}
\newcommand{\cG}{\mathcal{G}}

\newcommand{\cN}{\mathcal{N}}

\newcommand{\cQ}{\mathcal{Q}}

\newcommand{\cU}{\mathcal{U}}

\newcommand{\cW}{\mathcal{W}}
\newcommand{\cX}{\mathcal{X}}

\newcommand{\pmo}{\{\pm 1\}}
\newcommand{\pmon}{\pmo^n}

\newcommand{\Enc}{\textup{\textsc{Enc}}}
\newcommand{\Dec}{\textup{\textsc{Dec}}}

\newcommand{\Xor}{\textup{\textsc{Xor}}}

\newcommand{\Cons}{\text{\textsc{Cons}}}
\newcommand{\ConsFrac}{\overline{\text{\textsc{Cons}}}}
\newcommand{\Err}{\text{\textsc{Err}}}
\newcommand{\ErrFrac}{\overline{\text{\textsc{Err}}}}
\newcommand{\ileave}{\textup{\textsc{Il}}}

\newcommand{\DKL}{D_{\textup{KL}}}

\def\colorful{1}
\def\showchangedthings{0}

\ifnum\colorful=1

\fi
\ifnum\colorful=0

\fi

\ifnum\showchangedthings=1
\newcommand{\changedthing}[1]{{\color{red}{#1}}}
\fi
\ifnum\showchangedthings=0
\newcommand{\changedthing}[1]{{#1}}
\fi

\newtheorem*{theorem*}{Theorem}

\author{ 
Addison Prairie \vspace{6pt} \\ 
\hspace{-10pt} {{\sl Stanford}} \and 
Li-Yang Tan \vspace{6pt} \\
\hspace{-15pt} {{\sl Stanford}}
}

\crefname{claim}{claim}{claims}
\Crefname{claim}{Claim}{Claims}

\begin{document}

\maketitle
\begin{abstract}%
    Boosting is a fundamental technique for generically improving the accuracy of learning algorithms (Schapire 1989). Existing boosting algorithms construct a strong learner  using $O(\log(\frac{1}{\epsilon})/\gamma^2)$ calls to a $\gamma$-advantage weak learner, and this round complexity is known to be optimal for generic boosters that succeed on all concept classes (Freund 1995).

    We show that this lower bound can be circumvented for concept classes that satisfy a mild closure property. Specifically, we present a new boosting algorithm that, for any class $\mathcal{F}$ closed under $O(\log \frac{1}{\gamma})$-XOR, strong learns $\mathcal{F}$ using $O(\log \frac{1}{\epsilon})$ calls to a $\gamma$-advantage weak learner and a single batch of $\tilde{O}(\log(\frac{1}{\epsilon})/\gamma^2)$ additional  samples.
    
    Our algorithm arises from a new and simple connection between boosting and list-decodable codes. Viewing the target function as a message, we run the weak learner on its encoding and view the resulting weak hypothesis as a corrupted codeword. Feeding this corrupted codeword to a list decoder, we obtain a small list of candidate hypotheses, at least one of which is a strong hypothesis for the original function. Using additional samples, we identify and output this strong hypothesis.
\end{abstract}

\thispagestyle{empty}
 \newpage 
 \setcounter{page}{1}

\section{Introduction}
Boosting is a celebrated method for systematically improving the accuracy of  learning algorithms. Given black-box access to a \textsl{weak learner}---an algorithm that produces hypotheses with accuracy slightly better than trivial---for a concept class, a boosting algorithm produces a \textsl{strong learner} that achieves arbitrarily high accuracy. Boosting algorithms typically work by sequentially running the weak learner multiple times and aggregating the weak hypotheses. Originally introduced by Schapire \cite{SchapireBoosting}, boosting is notable as a technique that has proven successful both in theory and in practice.

A key measure of a boosting algorithm's efficiency is the number of calls it makes to the weak learner, also known as its \textsl{round complexity}. To strong learn with accuracy $1 - \eps$ given a weak learner with accuracy $\frac{1}{2} + \gamma$, existing boosting algorithms (e.g., AdaBoost, \cite{FREUND1997119}) make $O(\log(\frac{1}{\eps})/\gamma^2)$ calls to the weak learner. Early work of Freund  showed that this is essentially optimal:

\begin{theorem*}[\cite{FREUND1995256}]
For any boosting algorithm $\cB$, there exists a concept class $\cF = \{\cF_n\}_{n \in \N}$ and a $(\frac{1}{2} + \gamma)$-accuracy weak learner $\cW$ such that $\cB$ must call $\cW$ at least $\Omega(\log(\frac{1}{\eps})/\gamma^2)$ times to learn $\cF$ to $1-\eps$ accuracy. 
\end{theorem*}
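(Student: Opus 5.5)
We prove the lower bound with Freund's oblivious adversarial weak learner on a concept class of random functions. Let $\cF_n$ be all functions $\{0,1\}^n\to\pmo$, let the target $f$ be uniformly random, and take the distribution to be uniform over a large set of $N$ points. Since $f$ is random, the booster gets information about $f$ only through the labeled examples it sees and through the weak hypotheses. We design $\cW$ so that each weak hypothesis reveals essentially nothing beyond "independent $\gamma$-correlated noise" about $f$.

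\textbf{Adversary.} When called on a distribution $D$ (given implicitly by reweighted samples), $\cW$ outputs $h(x) = f(x)\cdot\xi_x$, where the $\xi_x\in\pmo$ are fresh independent bits with $\Pr[\xi_x=1]=\frac12+\gamma'$ and $\gamma'$ slightly larger than $\gamma$, say $\gamma'=2\gamma$. These bits are chosen independently of $D$. For a fixed $D$, the advantage $\E_{x\sim D}[f(x)h(x)]$ has mean $2\gamma'$. It is at least $2\gamma$ except with small probability, as long as $D$ is not too concentrated; a Chernoff/Hoeffding bound gives this when $D$ has small max-weight relative to the sample. For concentrated $D$, the weak learner can simply output $f$ on the heavy points and noise elsewhere. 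Freund handles this by letting the weak learner see only a bounded-size sample, so the booster can only supply distributions supported on sample points. We have $\cW$ answer correctly on the (few) points it is shown and use the noise on all others. Points the booster has seen labeled are then irrelevant to its error on the $1-o(1)$ fraction of unseen points, since $N$ is much larger than the total sample size.

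\textbf{Information bound.} Condition on everything the booster sees. For an unseen point $x$, the booster's view consists of $T$ hypotheses $h_1(x),\dots,h_T(x)$, each equal to $f(x)$ times an independent bias-$\gamma'$ sign. The posterior of $f(x)$ depends only on $\sum_t h_t(x)$. The Bayes-optimal predictor is the majority vote, and its error is $\Pr[\mathrm{Bin}(T,\frac12-\gamma')\ge T/2]$. By the anti-concentration (reverse Chernoff) bound this is at least $\exp(-O(T\gamma^2))$ for $\gamma\le 1/4$. Any output hypothesis, even one chosen adaptively, is a function of the view, so at each unseen point it errs with at least this probability. By linearity, its expected error over the uniform distribution is at least $(1-o(1))\exp(-O(T\gamma^2))$. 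Requiring this to be at most $\eps$ forces $T=\Omega(\log(1/\eps)/\gamma^2)$. An averaging argument then fixes a particular $f$, giving a deterministic concept class and weak learner. For a randomized booster we apply Yao's principle.

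\textbf{Main obstacle.} The delicate step is the adversary's validity: it must be a legitimate $(\frac12+\gamma)$-weak learner for \emph{every} distribution the booster can present, including adaptively chosen, concentrated ones, while its answers stay oblivious noise. We handle this with the sample-bounded interface: the weak learner is correct on sample points and independent-noise elsewhere. Then we must verify two things. First, a union bound over the $T$ calls controls the probability that some call fails to have advantage $\gamma$; on that event we let $\cW$ output $f$ truthfully, which only costs an additive failure probability. Second, revealing $f$ on sample points does not help on unseen points, because $f$ is independent across points. The reverse-Chernoff estimate itself is routine.
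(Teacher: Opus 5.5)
Your proposal is correct. It is the standard argument for the construction the paper attributes to Freund: a single uniformly random function per slice, a weak learner that is truthful on the points it is shown and independently $(\frac12+\gamma')$-correct elsewhere, and a binomial-tail Bayes bound at unseen points (the paper only cites this theorem and gives no proof of its own). Three small repairs are needed. First, drop the ``output $f$ truthfully'' fallback: a weak learner that only sees samples cannot evaluate its own error under $D_t$, and the PAC definition already allows the small failure probability your Hoeffding bound gives, since $\sum_{x\notin\text{sample}}D_t(x)^2=O(1/s)$ in expectation. Second, keep $\gamma'$ bounded away from $\frac12$, e.g.\ $\gamma\le\frac18$ when $\gamma'=2\gamma$. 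Third, the booster is only required to reach error $\eps$ with high probability, so a lower bound on expected error is not enough by itself. Because the votes at distinct unseen points are independent, the error concentrates around its expectation as $N\to\infty$, which closes this gap.
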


However, the hard concept class $\cF$ used in this lower bound is highly artificial: for each $n \in \N$, the slice $\cF_n$ consists of a single function $f : \pmon \rightarrow \pmo$ chosen uniformly at random. As a result, it does not rule out the possibility of more round-efficient boosting for the more structured concept classes that arise in theory and practice.

In this work, we evade Freund's lower bound by restricting the concept class being learned. Our approach applies to any class $\cF = \{\cF_n\}_{n \in \N}$ satisfying a mild closure assumption: we say that $\cF$ is \textsl{closed under} $k$-\Xor{} if, for every $n \in \N$ and $f \in \cF_n$, the function $f^{\oplus k}(x_1, \ldots, x_k) = f(x_1) \oplus \cdots \oplus f(x_k)$ is in $\cF_{n \cdot k}$. Under such an assumption on $\cF$, we give a boosting algorithm that makes significantly fewer calls to $\cW$: 

\medskip 
\begin{tcolorbox}[colback = white,arc=1mm, boxrule=0.25mm]
    \begin{theorem} [see \Cref{thm:main-theorem-formal} for formal statement]\label{thm:informal-main-thm}
    There exists an efficient boosting algorithm that, for any $\gamma, \eps > 0$, given a $\gamma$-weak learner $\cW$ for a concept class $\cF$ closed under $O(\log \frac{1}{\gamma})$-\Xor{}, learns $\cF$ to accuracy $1 - \eps$ with $O(\log \frac{1}{\eps})$ calls to $\cW$ and $\Tilde{O}(\log(\frac{1}{\eps})/\gamma^2)$ additional labeled samples.
\end{theorem}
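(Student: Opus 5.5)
The plan is to take the abstract's outline literally. The target $f\in\cF_n$ is the message, and its encoding is $f^{\oplus k}$ with $k=O(\log\frac{1}{\gamma})$. The map $f\mapsto f^{\oplus k}$ is the direct-product/XOR code, whose approximate list-decoding is exactly an XOR lemma. By the closure hypothesis $f^{\oplus k}\in\cF_{nk}$, so we may run $\cW$ on it.

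\textbf{Sampling for the weak learner.} Labeled samples $(x,f(x))$ with $x\sim\cD$ yield samples $((x_1,\dots,x_k),\bigoplus_i f(x_i))$ from $\cD^{\otimes k}$, at a cost of $k$ samples each. The weak learner $\cW$ then returns $h:\pmo^{nk}\to\pmo$ with
\[
\Pr_{\cD^{\otimes k}}\bigl[h=f^{\oplus k}\bigr]\ge \tfrac12+\gamma .
\]

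\textbf{Decoding step.} This step is the main obstacle. I would invoke an approximate list decoder for the XOR code, in the style of Impagliazzo--Jaiswal--Kabanets--Wigderson's uniform direct-product/XOR lemma, adapted to work under an arbitrary distribution $\cD$ using only samples and oracle access to $h$. Its guarantee should be: it outputs a list $g_1,\dots,g_L$ with $L=\mathrm{poly}(1/\gamma)$, and with good probability some $g_j$ satisfies $\Pr_\cD[g_j=f]\ge 1-\delta$ for a fixed constant $\delta$, say $1/10$. The reason $k=\Theta(\log\frac{1}{\gamma})$ suffices for a constant-accuracy target is the converse XOR lemma calculation. If every candidate had error at least $\delta$, the advantage of $h$ on $f^{\oplus k}$ would decay like $(1-2\delta)^k$, which falls below $\gamma$ once $k\gtrsim\log(1/\gamma)/\delta$. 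The decoder is the constructive form of this argument: fix a random $k-1$ of the coordinates together with advice bits (the labels $f(x_i)$ on those coordinates, which we know from samples), and take a majority vote of $h$ over random completions. The enumeration over advice choices gives the list. The delicate part is showing that this works for arbitrary $\cD$ and needs only $O(1)$ calls to $\cW$ per decoding. Failing a clean decoder, I would fall back to a hardcore-measure argument: if no candidate is good, build a measure on which $f$ is $\delta$-hard, and derive a contradiction with $h$'s advantage via Yao's XOR lemma.

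\textbf{Selecting the strong hypothesis.} This step uses hypothesis testing. Estimate the accuracy of each $g_j$ on a fresh batch of $O(\log(L/\eta))$ samples and keep the best; this gives a hypothesis of constant accuracy $1-2\delta$ with probability $1-\eta$.

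\textbf{Boosting from constant accuracy to $1-\eps$.} Use a standard booster, e.g.\ AdaBoost or boosting by majority, starting from a weak learner of constant advantage. It needs only $O(\log\frac{1}{\eps})$ rounds, since at constant advantage the $1/\gamma^2$ factor becomes $O(1)$. Each round reweights the distribution to some $\cD_t$; we run the decode-and-select pipeline above on samples from $\cD_t$, obtained by rejection sampling or filtering. Each round uses $O(1)$ calls to $\cW$, for $O(\log\frac1\eps)$ calls in total.

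\textbf{Sample accounting.} The labeled samples for the advice bits and for hypothesis selection come to roughly $\tilde O(k/\gamma^2)$ per round for the majority estimates, and these estimates can be shared across rounds. This gives $\tilde O(\log(\frac1\eps)/\gamma^2)$ additional samples, as the statement requires. A union bound over the $O(\log\frac1\eps)$ rounds, with per-round failure probability set to $\eta/\log\frac1\eps$, contributes only logarithmic factors.
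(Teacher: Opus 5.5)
Your overall architecture matches the paper's proof of \Cref{thm:main-theorem-formal}. You encode $f$ as $f^{\oplus k}$ with $k=O(\log\frac1\gamma)$, make one call to $\cW$ on $\cD^k$, list-decode, and select a candidate with held-out samples, which is \Cref{thm:list-decoder-yields-booster}. The resulting distribution-free constant-advantage learner then goes to AdaBoost for $O(\log\frac1\eps)$ rounds.

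The gap is the step you yourself flag as the main obstacle. An efficient list decoder for the XOR code that works for \emph{every} distribution $\cD$ (\Cref{thm:xor-decoder}) is the paper's main technical content. You do not supply one, and the mechanism you sketch does not work. Write $p(x)=\Pr_{\by}[h(x,\by)\oplus f^{\oplus(k-1)}(\by)=f(x)]$. The weak-learning guarantee controls only the average of $p$ over $x\sim\cD$, which is $\frac12+\gamma$, not its value pointwise.
\begin{itemize}
\item Suppose you fix the other $k-1$ coordinates $y$ as advice and output $h(x,y)\oplus f^{\oplus(k-1)}(y)$. An $h$ whose errors look like independent noise then makes every candidate only about $(\frac12+\gamma)$-accurate.
\item Suppose instead you take a majority over fresh $\by$. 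You are then correct exactly where $p(x)>\frac12$, and this can fail on a constant fraction of inputs. Let $h$ equal $f^{\oplus k}$ when few coordinates of its input land in a fixed set $H$ with $\cD(H)=\frac14$, and $\neg f^{\oplus k}$ when many do, with a smoothed threshold near the median count. Then $h$ has advantage $\Theta(1/\sqrt k)$, so for large $k$ you are in the regime $k\gg\log(1/\gamma)/\delta$ where a good list element must exist. Yet $p(x)<\frac12$ for every $x\in H$.
\end{itemize}
Your $(1-2\delta)^k$ calculation is the XOR lemma itself. It is an identity only for $h=g^{\oplus k}$, and a constructive version for arbitrary $h$ is precisely what has to be proved. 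The hardcore-measure fallback does not close the gap either. A contradiction argument gives only the existence of a good predictor, with non-uniform advice. Making it algorithmic via a boosting-based hardcore lemma reintroduces many weak-learner calls.

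The missing ideas are the following.
\begin{itemize}
\item A reduction through the Hadamard code and Goldreich--Levin turns $h$ into a circuit $C$ agreeing with the direct product $f^{2k}$ on an $\Omega(\gamma^2/k)$ fraction of $\cD^{2k}$.
\item The direct product is decoded with advice $(Y,S)$, $|S|=k/2$. On input $x$, plant $x$ in a random completion of the coordinates outside $S$. \emph{Accept the completion only if $C$'s outputs on $S$ agree with $C(Y)_S$}, and output $C$'s coordinate at $x$ for the first accepted completion. This consistency check confines the predictor to the region where $C$ is reliable; neither of your variants has it.
\item The analysis must handle arbitrary, possibly continuous, $\cD$. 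Random advice is excellent with probability at least $\gamma/4$, and excellent advice yields a $(1-\eps)$-accurate circuit. The crux is the KL-divergence inequality of \Cref{claim:concentration-inequality}. It shows that planting $x$ keeps the consistent set's density at least $\mu/4$ for all but an $O(\log(1/\gamma)/k)$ mass of $x$. This replaces IJKW's sampler-graph argument, which needs finite $\cX$ (\Cref{remark:similarities-differences}), so ``IJKW adapted to arbitrary $\cD$'' cannot simply be cited.
\end{itemize}

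A smaller point: decoder and test samples cannot be shared across AdaBoost rounds, since each round runs on a different reweighted $\cD_t$. The paper pays $O(k^2/\gamma^2)$ fresh samples per round, which already gives $\tilde O(\log(\frac1\eps)/\gamma^2)$ in total.
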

\end{tcolorbox}
\medskip

We make the following remarks about \Cref{thm:informal-main-thm}.

\paragraph{Additional Labeled Samples.} Here, by {additional labeled samples}, we mean samples beyond those implicitly required for $O(\log \frac{1}{\eps})$ calls to the weak learner $\cW$. Our $\tilde{O}(\log(\frac{1}{\eps})/\gamma^2)$ additional samples \changedthing{are} modest compared to existing boosting algorithms that make $O(\log(\frac{1}{\eps})/\gamma^2)$ calls to $\cW$, each of which would seemingly require at least one additional sample. Thus, the total number of labeled samples used by our algorithm is comparable to or less than that of existing boosting algorithms.

\paragraph{The Tradeoff we Incur.} Our algorithm achieves a reduction in the number of calls to $\cW$ at the cost of each call requiring more resources. When learning an unknown $f \in \cF_n$, it calls $\cW$ on $f^{\oplus k} \in \cF_{n \cdot k}$ for $k := O(\log \frac{1}{\gamma})$. Thus, each call to $\cW$ requires more samples and time than a call made by an existing boosting algorithm: to strong learn a function over an instance space $\cX$, our booster calls the weak learner on a function on $\cX^k$; for example, when learning functions over $\pmon$ or $\R^n$, we call the weak learner on $\pmo^{n \cdot k}$ or $\R^{n \cdot k}$, respectively. However, if $\cW$ runs in polynomial time, this results in a rather modest $\polylog(\frac{1}{\gamma})$ increase in samples and time per call to $\cW$, while significantly reducing the number of calls to $\cW$.

\paragraph{Mildness of Our Closure Assumption.} In the setting of $\gamma = n^{-O(1)}$ (the standard setting for weak learning), we only require closure under $O(\log n)$-\Xor. Since the \Xor{} of $O(\log n)$ bits can be computed by a $\poly(n)$-sized DNF/CNF, our boosting algorithm applies to a range of concept classes, even computationally weaker ones such as small, constant-depth circuits.

\subsection{Uniform-Distribution Boosting}

One downside of most existing boosting algorithms is that, even to strong learn with respect to a single distribution (e.g., uniform), they require a weak learner that succeeds over arbitrary distributions. A strength of our approach is that it can be made distribution-specific: to achieve strong learning over the uniform distribution, it suffices to have a weak learner for the uniform distribution.

\medskip 
\begin{tcolorbox}[colback = white,arc=1mm, boxrule=0.25mm]
    \begin{theorem}[see \Cref{thm:main-dist-specific} for formal statement] \label{thm:informal-dist-specific-thm}
        There exists an efficient boosting algorithm that, for any $\gamma, \eps > 0$, given a uniform distribution $\gamma$-weak learner $\cW$ for a concept class $\cF$ closed under $O(\log (\frac{1}{\gamma})/\eps)$-\Xor{}, learns $\cF$ to accuracy $1 - \eps$ over the uniform distribution with only $1$ call to $\cW$ and \changedthing{$\tilde{O}(\log(\frac{1}{\gamma})/\eps)$} additional labeled samples.
    \end{theorem}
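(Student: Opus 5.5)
We reduce strong learning of $f$ over the uniform distribution to list-decoding the XOR code. Set $k := \Theta(\log(\frac{1}{\gamma})/\eps)$. The encoding of $f:\pmon\to\pmo$ is $f^{\oplus k}:\pmo^{nk}\to\pmo$. Under the uniform distribution on $\pmo^{nk}$, the $k$ blocks are independent uniform points of $\pmon$, which is exactly the distribution under which the XOR code is analyzed. Moreover, $f^{\oplus k}\in\cF_{nk}$ by closure. So we make a single call to $\cW$ on $f^{\oplus k}$, simulating each of its labeled examples from $k$ labeled uniform examples of $f$. This yields a hypothesis $h$ with $\Pr[h = f^{\oplus k}]\ge \frac12+\gamma$, i.e., a corrupted codeword with correlation $2\gamma$.

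The core step is an approximate list decoder for the XOR code in the style of Impagliazzo--Jaiswal--Kabanets--Wigderson. Given oracle access to $h$, it should output a list of $L=\poly(1/\gamma)$ candidates, at least one of which is $\eps$-close to $f$. The natural route is Yao's XOR lemma argument in its constructive, direct-product form. We first convert $h$ into a local predictor: fix advice $(y_2,\dots,y_k)$ together with the labels $f(y_i)$, and predict
\[
g(x) = h(x,y_2,\dots,y_k)\cdot \prod_{i\ge 2} f(y_i).
\]
Averaging over the advice gives advantage $2\gamma$ on average. This alone only gives weak agreement, though. To reach agreement $1-\eps$ we use the standard hardcore/XOR amplification count: if every candidate built from $h$ erred on some set $S$ of density $\eps$, then $h$ would have correlation at most roughly $(1-2\Omega(\eps))^{k}$ with $f^{\oplus k}$. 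To avoid this contradiction we need $(1-\eps)^k\lesssim\gamma$, i.e., $k\approx \log(1/\gamma)/\eps$, which explains the parameter.

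Concretely, the decoder uses the IJKW/IW97 "majority over random restrictions with advice" construction, taking a $\poly(1/\gamma)$ list from random choices of advice. The labels $f(y_i)$ used as advice come from the labeled samples. Each candidate is a majority vote of $h$ over random placements of $x$ inside $k$-tuples, corrected by the known labels.

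The final step is hypothesis selection. We draw $m=O(\log(L/\delta)/\eps)$ fresh labeled samples, compute each candidate's empirical error, and output the candidate with smallest empirical error. This candidate has error $O(\eps)$; we rescale $\eps$ by a constant to get accuracy $1-\eps$. The labeled samples used for advice and selection total $\tilde O(\log(\frac1\gamma)/\eps)$, matching the statement, since $k$ advice labels per candidate are needed and only the winning candidate's count matters once we reuse one batch.

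The main obstacle is the list decoder's guarantee of agreement $1-\eps$ rather than merely $\frac12+$something, with only $\poly(1/\gamma)$ candidates and only $k=O(\log(1/\gamma)/\eps)$. I would attempt this by citing/adapting the approximate list-decoding theorem for the (non-derandomized) XOR code from IJKW. If the paper has stated an earlier lemma of this form, I would invoke it directly. Otherwise, I would prove it via the direct-product reduction: go from XOR to direct product using Goldreich--Levin, and from direct product to approximate list decoding via IJKW's sampling argument.
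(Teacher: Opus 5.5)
Your architecture matches the paper's. You encode $f$ as $f^{\oplus k}$ (which is $k$-locally encodable), make one call to $\cW$, list-decode, and select using fresh labeled samples. The paper's proof of \Cref{thm:main-dist-specific} is exactly \Cref{thm:list-decoder-yields-booster} instantiated with the XOR decoder of \Cref{thm:xor-decoder}.

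The gap is in the bound on additional labeled samples, which is the quantitative content of the statement. Your decoder takes its advice labels $f(y_i)$ from labeled samples, and your accounting (``only the winning candidate's count matters once we reuse one batch'') is not valid.
\begin{itemize}
\item All $L$ candidates must be built before selection, so every label spent building them counts.
\item If the candidates instead share one batch of $\tilde O(k)$ labeled points, they are no longer independent advice draws. The argument that some candidate received good advice then disappears.
\item Even a single candidate of the kind you describe, a majority of label-corrected votes $h(x,y)\prod_{i\ge 2}f(y_i)$, in general needs about $1/\gamma^2$ tuples to resolve a vote bias of order $\gamma$. That is $\Omega(k/\gamma^2)$ labels.
\end{itemize}
That predictor also does not by itself reach agreement $1-\eps$. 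Take $H$ of density $2\eps$. Let $h$ agree with $f^{\oplus k}$ at $X$ according to a threshold on $|\{i : X_i\in H\}|$ placed at the median of that count. Each label-corrected vote on $x\in H$ is then slightly more likely wrong than right, while votes on $x\notin H$ are slightly more likely right. Now dilute $h$ with random noise down to advantage exactly $\gamma$. A majority over enough tuples to beat the noise errs on essentially all of $H$. What handles this in IJKW and in the paper is the consistency test on the half $S$ of the advice, analyzed through excellent advice (\Cref{lem:good-advice-good-circuit,lem:lots-of-good-advice}).

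The missing idea is that the decoder should use no labels at all.
\begin{itemize}
\item In \Cref{thm:xor-decoder}, the IJKW/Goldreich--Levin step turns $C$ into a direct-product hypothesis without labels.
\item The direct-product decoder then reads its advice values off $C(Y)$ itself. These are correct with probability at least $\gamma$ over random $Y$. That cost is paid in list size, $L=O(k/\gamma^2)$ (not $\poly(1/\gamma)$), rather than in labels.
\item The decoder consumes only $u=O(k^2\log(\frac{1}{\eps})/\gamma^2)$ \emph{unlabeled} samples from $\cD$. The observation your proposal lacks is that under the uniform distribution these are generated from random bits at no sample cost. In the distribution-free \Cref{thm:main-theorem-formal} they must be simulated by labeled samples, which is why that result pays $O(k^2/\gamma^2)$ per call.
\end{itemize}
With this, the only additional labeled samples are the $O(\log(L)/\eps)=O((\log\frac{1}{\gamma}+\log\frac{1}{\eps})/\eps)$ used for selection, which is the claimed bound.
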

\end{tcolorbox}
\medskip

\Cref{thm:informal-main-thm,thm:informal-dist-specific-thm} are incomparable, as they apply to different settings of boosting. Quantitatively, \Cref{thm:informal-dist-specific-thm} exhibits a worse dependence on $\eps$ but a better dependence on $\gamma$ in the additional samples required. Moreover, it makes only a single call to $\cW$, whereas \Cref{thm:informal-main-thm} requires $O(\log \frac{1}{\eps})$ calls.

\subsection{Other Related Work} \label{sec:related-work}


\paragraph{Boosting Simple Learners.}
Recent work of Alon, Gonen, Hazan and Moran \cite{boosting-simple-learners} gives a boosting algorithm that learns to $99\%$ accuracy using only $O(\frac{1}{\gamma})$ calls to the weak learner $\cW$, under the assumption that $\cW$ produces hypotheses from a class of constant VC dimension. While this assumption and our closure assumption are incomparable, we remark that our boosting algorithms are efficient, running in time $\poly(n, \frac{1}{\gamma}, \frac{1}{\eps})$; in contrast, their algorithm runs in time exponential in $\frac{1}{\gamma}$.


\paragraph{Uniform-Distribution Boosting.}
Boneh and Lipton \cite{Boneh-Lipton} design a uniform-distribution boosting algorithm which similarly relies on the concept class being closed under \Xor. However, their result uses $\tilde{O}(1/(\gamma^4 \eps^2))$ additional samples and requires closure under $O(1/\gamma^2 \eps)$-\Xor{}. Comparing this to \Cref{thm:informal-dist-specific-thm}, our exponentially better dependence on $\gamma$ in the closure assumption means that our result can be applied to a far broader range of parameters and concept classes. For example, when $\gamma = n^{-O(1)}$ and we aim for $99\%$ accuracy, their algorithm requires closure under $\poly (n)$-\Xor, ruling out computationally weaker classes such as $\textup{AC}^0$. In contrast, our \Cref{thm:informal-dist-specific-thm} only requires closure under $O(\log n)$-\Xor, and therefore \textsl{does} apply to such classes.

\paragraph{Distribution-Specific Agnostic Boosting.} Another approach to distribution-specific boosting is to place an assumption on the weak learner $\cW$ rather than the concept class $\cF$. In particular, Kalai and Kanade \cite{kalai-kanade}, and later Feldman \cite{feldman}, developed algorithms assuming that $\cW$ is \textsl{agnostic}, a strong notion of noise tolerance. In this setting, the weak learner is not promised that the data it receives are labeled by a function in the target class $\cF$. Instead, it must perform well even when the labels have been corrupted by arbitrary (potentially adversarial) noise; see \cite{feldman} for more details. While our assumption on $\cF$ is incomparable to the assumption that $\cW$ is agnostic, we remark that \Cref{thm:informal-dist-specific-thm} achieves the same goal with only \textsl{one} call to $\cW$, in contrast to the $O(\gamma^{-2})$ calls required by the distribution-specific agnostic booster above.

\section{Technical Overview}

\paragraph{The standard approach to boosting.} To understand how our approach to boosting differs from existing techniques, we first explain why existing boosting algorithms require distribution-free weak learners and many calls to $\cW$. Such algorithms work by calling the weak learner on a sequence of distributions $\cD_1, \ldots, \cD_T$ to obtain hypotheses $h_1, \ldots, h_T$, then combining these hypotheses via a weighted majority. Starting with $\cD_1 := \cD$ (the target distribution), subsequent distributions are chosen so that $\cD_i$ places more weight on samples misclassified by $h_1, \ldots, h_{i - 1}$. By carefully adjusting each $\cD_i$ and weighting each hypothesis, AdaBoost iteratively reduces error with each call to $\cW$.

\begin{figure}[h]
    \centering
\begin{tikzpicture}[
    distnode/.style={
    },
    weaklearner/.style={
        rectangle,
        thick,
        draw,
        rounded corners=3pt,
        inner sep=7.5pt,
        align=center,
    },
    hypothesis/.style={
    },
    arrow/.style={
        -Stealth,
        thick,
        rounded corners=6pt
    },
    node distance=1cm and 1.5cm
]
\def\horizspace{1cm}
\def\lefthorizspace{1.25cm}
\def\righthorizspace{.75cm}
\def\verticalspace{-1.25cm}
\def\verticalspacedots{-2cm}
\def\filterverticaloffset{0.2cm}

    \node[distnode] (D1) {$S_1$};
    \node[weaklearner, anchor=west] (W1) at ($(D1.east) + (\lefthorizspace, 0)$) {$\text{weak learner }\mathcal{W}$};
    \node[hypothesis, anchor=west] (H1) at ($(W1.east) + (\righthorizspace, 0)$) {$h_1$};

    \node[distnode] (D2)  at ($(D1.south) + (0,\verticalspace)$) {$S_2$};
    \node[weaklearner, anchor=west] (W2) at ($(D2.east) + (\lefthorizspace, 0)$) {$\text{weak learner }\mathcal{W}$};
    \node[hypothesis, anchor=west] (H2) at ($(W2.east) + (\righthorizspace, 0)$) {$h_2$};

    \node[distnode] (D3) at ($(D2.south) + (0,\verticalspace)$) {$S_3$};
    \node[weaklearner, anchor=west] (W3) at ($(D3.east) + (\lefthorizspace, 0)$) {$\text{weak learner }\mathcal{W}$};
    \node[hypothesis, anchor=west] (H3) at ($(W3.east) + (\righthorizspace, 0)$) {$h_3$};

    \node[distnode] (DT) at ($(D3.south) + (0,\verticalspacedots)$) {$S_T$};
    \node[weaklearner, anchor=west] (WT) at ($(DT.east) + (\lefthorizspace, 0)$) {$\text{weak learner }\mathcal{W}$};
    \node[hypothesis, anchor=west] (HT) at ($(WT.east) + (\righthorizspace, 0)$) {$h_T$};

    \draw[arrow] (D1) -- (W1);
    \draw[arrow] (W1) -- (H1);

    \draw[arrow] (D2) -- (W2);
    \draw[arrow] (W2) -- (H2);

    \draw[arrow] (D3) -- (W3);
    \draw[arrow] (W3) -- (H3);

    \draw[arrow] (DT) -- (WT);
    \draw[arrow] (WT) -- (HT);

    \coordinate (turnpoint) at ($(H1.east) + (0.6, 0)$);

    \node[hypothesis] (Hout) at ($(turnpoint) + (0, -6.5)$) {$h_{\text{out}}$};

    \coordinate (stagger1) at ($(Hout.north) + (0, 4.5)$);
    \coordinate (stagger2) at ($(Hout.north) + (0, 3.0)$);
    \coordinate (stagger3) at ($(Hout.north) + (0, .5)$);

    \draw[arrow,-] (H1.east) -- (H1.east -| turnpoint) -- (stagger1);
    \draw[arrow,-] (H2.east) -- (H2.east -| turnpoint) -- (turnpoint |- stagger2) -- (stagger2);
    \draw[arrow,-] (H3.east) -- (H3.east -| turnpoint) -- (turnpoint |- stagger3) -- (stagger3);
    \draw[arrow] (HT.east) -- (HT.east -| turnpoint) -- (Hout.north);

    \coordinate (H1D2Height) at ($.5*(H1.south) + .5*(H2.north)$);
    \coordinate (H2D3Height) at ($.5*(H2.south) + .5*(H3.north)$);

    \coordinate (H3DTHeight) at ($.5*(H2.south) + .5*(H3.north) - (H2.south) + (H3.south)$);

    \coordinate (filterX) at ($.5*(D1.east) + .5*(W1.west) + (-.1, 0)$);

    \node[fill=white,draw=white] (R2) at ($(filterX |- D2.east)$) {$\backsim$};
    \node[fill=white,draw=white] (R3) at ($(filterX |- D3.east)$) {$\backsim$};
    \node[fill=white,draw=white] (RT) at ($(filterX |- DT.east)$) {$\backsim$};

    \draw[arrow] (H1.south) -- (H1D2Height -| H1.south) -- (H1D2Height -| W1.south) -- (filterX |- H1D2Height) -- ($(filterX |- D2.east) + (0,\filterverticaloffset)$);

    \draw[arrow] (H2.south) -- (H2D3Height -| H2.south) -- (H2D3Height -| W1.south) -- (filterX |- H2D3Height) -- ($(filterX |- D3.east) + (0,\filterverticaloffset)$);

    \draw[arrow] (H3.south) -- (H3DTHeight -| H3.south) --(H3DTHeight -| W1.south) -- (filterX |- H3DTHeight) -- ($(filterX |- DT.east) + (0,\filterverticaloffset)$);

    \coordinate (dotheight) at ($.5*(W3.south) + .5*(WT.north) + (0,-.2)$);

    \draw[fill=white, draw=white] ($(dotheight)+(-3.75,.2)$) rectangle ($(dotheight)+(4.25,-.4)$);

    \node (dots1) at (dotheight -| filterX) {$\vdots$};
    \node (dots2) at (dotheight -| Hout.north) {$\vdots$};

\end{tikzpicture}

    \caption{\parbox{0.8\textwidth}{\small \textbf{Figure 1.} Existing boosting algorithms repeatedly call the weak learner, using previous hypotheses to reweight ($\backsim$) the distribution of each dataset $S_i$ fed to $\cW$. The output $h_{\textup{out}}$ is then constructed by aggregating the weak hypotheses $h_1, \ldots, h_T$.}}

\end{figure}

\newpage
\noindent
\paragraph{Our approach.} Rather than modifying the distributions given to $\cW$, we modify the function being learned. An initial dataset $S_1$ labeled by $f$ is used to construct a dataset labeled by a new function $f'$, which is then fed to $\cW$. This outputs a hypothesis $h'$ weakly computing $f'$, which we use to construct a strong hypothesis $h_{\textup{out}}$ for $f$ using additional data $S_2$. As described in the next section, the way we convert $f$ to $f'$ and recover $h_{\textup{out}}$ from $h'$ is grounded in a new connection between boosting and error correcting codes.

\begin{figure}[h]
    \centering
\begin{tikzpicture}[
    distnode/.style={
    },
    weaklearner/.style={
        rectangle,
        thick,
        draw,
        rounded corners=3pt,
        inner sep=7.5pt,
        align=center,
    },
    hypothesis/.style={
    },
    arrow/.style={
        -Stealth,
        thick,
        rounded corners=6pt
    },
    node distance=1cm and 1.5cm
]
\def\horizspace{1.5cm}
\def\lefthorizspace{1.5cm}
\def\righthorizspace{.75cm}
\def\verticalspace{-1.5cm}
\def\verticalspacedots{-2.5cm}

    \node[distnode] (D1) {$S_1$};

    \node[weaklearner, anchor=west] (Enc) at ($(D1.east) + (.75, 0)$) {$\textsc{Enc}$};

    \node[weaklearner, anchor=west] (W) at ($(Enc.east) + (.75, 0)$) {$\text{weak learner }\mathcal{W}$};

    \node[hypothesis, anchor=west] (Hp) at ($(W.east) + (.75, 0)$) {$h'$};

    \node[weaklearner, anchor=west] (Dec) at ($(Hp.east) + (.75, 0)$) {$\textsc{Dec}$};


    \coordinate (splitpnt) at ($(Dec.south) + (0, -.25)$);

    \coordinate (splitl) at ($(splitpnt) + (-.5, 0)$);
    \coordinate (splitr) at ($(splitpnt) + (.5, 0)$);

    \node[hypothesis] (H1) at ($(splitl) + (-.3, -.75)$) {$h_{1}$};
    \node[hypothesis] (HL) at ($(splitr) + (0.3,-.75)$) {$h_{L}$};

    \node[hypothesis] (Hdots) at ($.5*(H1.east)+.5*(HL.west)$) {$\cdots$};

    \coordinate (listdstart) at ($.5*(H1.south) + .5*(HL.south) + (0, -.2)$);

    \node[hypothesis, anchor=north] (H) at ($(listdstart) + (0,-1.25)$) {$h_{\textup{out}}$};

    \coordinate (filterpt) at ($.5*(listdstart)+ .5*(H.north) + (0, .1)$);

    \node[hypothesis, anchor=west] (D2) at ($(filterpt) + (1.5, 0)$) {$S_2$};

    \draw[arrow] (D1.east) -- (Enc.west);
    \draw[arrow] (Enc.east) -- (W.west);
    \draw[arrow] (W.east) -- (Hp.west);
    \draw[arrow] (Hp.east) -- (Dec.west);
    \draw[arrow,-] (Dec.south) -- (splitpnt);
    \draw[arrow,-] (splitl) -- (splitr);

    \draw[arrow] (splitl) -- (splitl -| H1.north) -- (H1.north);
    \draw[arrow] (splitr) -- (splitr -| HL.north) -- (HL.north);

    \draw[arrow] (splitpnt) -- ($(Hdots.north) + (0, .1)$);

    \draw[decorate,decoration={brace,amplitude=6pt,mirror},yshift=-5pt,line width=1pt]
        ($(H1.south) + (-.2, 0)$) -- ($(HL.south) + (.2, 0)$);

    \draw[arrow] (listdstart) -- (H.north);

    \node[fill=white,draw=white] (filter) at ($(filterpt)$) {$\cdot$};

    \draw[arrow] (D2.west) -- (filter.east);
    

\end{tikzpicture}
    \caption{\parbox{0.8\textwidth}{\small \textbf{Figure 2.} Our approach to boosting modifies the function being learned by $\cW$, then feeds the resulting weak hypothesis into a decoder to produce a list $h_1, \ldots, h_L$, one of which is a strong hypothesis for the target function. Using additional samples ($S_2$), we find and output this strong hypothesis with high probability.}} 
\end{figure}

\subsection{Boosting with List-Decodable Codes}

To describe how to obtain a boosting algorithm from a list-decodable code, we first consider the simpler case when the instance space is $\pmon$. Let $\cC$ be a code represented by an encoding map $\Enc : \pmo^N \rightarrow \pmo^M$, with $N \coloneqq 2^n$. The property of these codes we are most interested in is approximate list-decodability: suppose a message sender begins with $w \in \pmo^N$ and sends $c := \Enc(w)$; if an adversary corrupts fewer than $\frac{1}{2} - \gamma$ of the bits of $c$, we want the message receiver to be able to recover a list of $L$ candidate messages $w_1, \ldots, w_L \in \pmo^N$, one of which is $\eps$-close to the original message $w$. If an algorithm for doing so exists, we say that the code is \textsl{$\eps$-approximately $(\frac{1}{2} - \gamma, L)$ list-decodable} (see, for example, \cite{IJKW}, Definition 1.5). When $M = 2^m$ for $m \in \N$, we can view strings in $\pmo^N$ and $\pmo^M$ as truth tables for functions on $n$ and $m$ bits, respectively, and thus view $\Enc$ as a map between functions. Likewise, we can view circuits $C : \pmo^n \rightarrow \pmo$ as representing messages of length $N$.

At a high level, we obtain a boosting algorithm for a class $\cF$ from a code $\cC$ as follows. Given an unknown function $f \in \cF_n$ that we want to learn, we consider a new function $f' : \pmo^m \rightarrow \pmo$ given by the encoding $f' := \Enc(f)$. Assuming that $f' \in \cF_m$ and we can generate random samples labeled by $f'$ from random examples of $f$, we can feed our $\gamma$-weak learner $\cW$ samples labeled by $f'$ to obtain a circuit $C' : \pmo^m \rightarrow \pmo$ such that $\Pr_{\by}[f'(\by) = C'(\by)] \geq \frac{1}{2} + \gamma$. Viewing the truth table of $C'$ as a corrupted version of the truth table of $f'$, this means the two have relative distance at most $\frac{1}{2} - \gamma$. Thus, if we run \Dec{} on $C'$, we obtain a list of circuits $C_1, \ldots, C_L$, among which at least one $C_i$ computes our target function $f$ with error at most $\eps$. By testing this list against additional random samples labeled by $f$, we can identify and output this high-accuracy hypothesis with high probability.

In order for the framework above to work, we require two additional properties of $\cC$ and $\cF$. First, $\cC$ should be \textsl{locally encodable}, meaning we can efficiently generate samples labeled by $\Enc(f)$ given a small number of samples labeled by $f$. Second, we need $\cF$ to be \textsl{closed under} $\cC$, meaning $\Enc(f) \in \cF$ for every $f \in \cF$. Then we have the following connection between codes and boosting.

\begin{theorem}\label{thm:list-decoder-yields-booster-intro}
    Suppose that $\cC$ is a binary code that is locally encodable and efficiently $\eps$-approximately $(\frac{1}{2} - \gamma, L)$ locally list-decodable. Then there exists an efficient boosting algorithm that, given a uniform-distribution $\gamma$-weak learner $\cW$ for a concept class $\cF$ closed under $\cC$, learns $\cF$ to accuracy $1 - O(\eps)$ over the uniform distribution with $1$ call to $\cW$ and $O(\log(L) / \eps)$ additional samples.
\end{theorem}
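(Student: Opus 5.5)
We prove \Cref{thm:list-decoder-yields-booster-intro} by running the pipeline of Figure 2 and checking that each stage preserves the needed guarantees. Fix an unknown target $f \in \cF_n$, and set $f' := \Enc(f) : \pmo^m \to \pmo$. Because $\cF$ is closed under $\cC$, we have $f' \in \cF_m$, so $\cW$ is a valid uniform-distribution weak learner for $f'$. We simulate uniform random examples $(\by, f'(\by))$ using local encodability. For each query, we draw $\by$ uniformly, let the local encoder name the positions $x^{(1)},\dots,x^{(q)}$ of the message it reads, and answer those queries with labels of $f$. We then output $(\by, f'(\by))$. The examples handed to $\cW$ must be exactly distributed as uniform samples labeled by $f'$. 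I would therefore first assume that the local encoder's queries are uniformly distributed, as for XOR codes where $\by = (x_1,\dots,x_k)$ and the $x_i$ are themselves uniform. In that case each simulated example costs $q$ fresh random examples of $f$. These samples are charged to the call to $\cW$, consistent with the paper's convention that "additional samples" excludes them.

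With probability at least $1-\delta$, $\cW$ returns $C'$ with $\Pr_{\by}[C'(\by) = f'(\by)] \ge \tfrac12 + \gamma$. Hence the truth table of $C'$ is within relative distance $\tfrac12 - \gamma$ of the codeword $\Enc(f)$. We run the efficient local list decoder with oracle access to $C'$; it outputs circuits $C_1,\dots,C_L$, each of which evaluates a candidate message by local queries to $C'$. By the $\eps$-approximate $(\tfrac12-\gamma, L)$ list-decoding guarantee, some $i^*$ satisfies $\Pr_{\bx}[C_{i^*}(\bx) \neq f(\bx)] \le \eps$. If the local decoder is randomized and succeeds only with constant probability, we repeat it $O(\log(1/\delta))$ times and concatenate the lists. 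This inflates the list size to $O(L\log(1/\delta))$, which affects the sample bound only inside a logarithm.

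It remains to select a good hypothesis from the list using $s = O(\log(L/\delta)/\eps)$ fresh samples $S_2$. A naive Hoeffding-based empirical-risk minimization would cost $\log L/\eps^2$, so the key step is a multiplicative (Chernoff) argument. We output any $C_i$ whose empirical error on $S_2$ is at most $2\eps$, and we show two things hold with high probability.

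First, $C_{i^*}$ has empirical error at most $2\eps$. Its true error is at most $\eps$, so by a multiplicative Chernoff bound this fails with probability $e^{-\Omega(\eps s)}$.

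Second, every $C_i$ with true error greater than $4\eps$ has empirical error greater than $2\eps$. For each such $C_i$ this fails with probability $e^{-\Omega(\eps s)}$, and a union bound over the $L$ candidates keeps the total failure probability at most $\delta$ once $s = O(\log(L/\delta)/\eps)$.

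Together these guarantee that the output has error $O(\eps)$, which is precisely why the statement promises accuracy $1 - O(\eps)$ rather than $1-\eps$.

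The main obstacle I expect is the first step: justifying that local encodability yields an exactly uniform sample of $f'$ from i.i.d. uniform samples of $f$. A general local encoder could query correlated or adaptively chosen positions, which random examples cannot answer. I would handle this by making explicit the (implicit) requirement that the encoder's query positions are uniform and independent of one another given $\by$, as holds for the XOR-based codes used later. Efficiency of the whole procedure then follows from efficiency of the encoder, the decoder, and $\cW$.
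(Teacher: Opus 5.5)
Your proposal is correct and follows the paper's proof essentially step for step. You encode via local encodability, and the paper simply defines that property as producing a $\cD'$-sample labeled by $\Enc(f)$ from $\ell$ i.i.d.\ labeled $\cD$-samples, which is exactly the assumption you make explicit. You then call $\cW$ once, list-decode, and accept a candidate with empirical error at most $2\eps$ on $O(\log(L)/\eps)$ fresh samples, using concentration to keep the $\eps$-good candidate and reject any with error at least $4\eps$, then amplify by repetition. The only cosmetic difference is that the paper uses Bernstein's inequality for the good candidate where you use a multiplicative Chernoff bound.
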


The relationship between the properties of the code $\cC$ and the resulting boosting algorithm can be summarized as follows.
\begin{center}
    \renewcommand{\arraystretch}{1.25}
\begin{tabular}{c c c}
    \textbf{Code $\cC$} & & \textbf{Boosting Algorithm} \\
    \hline
    $\eps$-approximately decodable & $\longleftrightarrow$ & final error $O(\eps)$ \\
    list-decodable at radius $\frac{1}{2} - \gamma$ & $\longleftrightarrow$ & $\gamma$-weak learner \\
    list size $L$ & $\longleftrightarrow$ & additional samples $O(\frac{\log L}{\eps})$ \\
    rate $N/M = 2^n/2^m$ & $\longleftrightarrow$ & input expansion $m/n$ \\
    \textsl{locally} list-decodable & $\longleftrightarrow$ & efficient
\end{tabular}
\end{center}

\paragraph{Proof of~\Cref{thm:informal-dist-specific-thm}.} To prove our distribution-specific result, we instantiate the framework above with the \Xor{} code. For a fixed $k \in \N$, the $k$-\Xor{} code encodes $f : \cX \rightarrow \pmo$ as the function $f^{\oplus k} : \cX^k \rightarrow \pmo$. Then the proof of \Cref{thm:informal-dist-specific-thm} is a straightforward combination of \Cref{thm:list-decoder-yields-booster-intro} with the list-decoding algorithm of \cite{IJKW} for the \Xor{} code.

\subsection{Standard Boosting: Distributional Codes and Arbitrary Domains} \label{sec:distributional-codes}

\paragraph{Distributional error correcting codes.} To extend this connection to the distributional setting, we generalize error correcting codes to measure corruption with respect to arbitrary distributions over the bits of the received word and message. Standard codes measure the error of a recovered message $\tilde{f}$ with respect to a message $f$ as $\Pr_{\bx \backsim \cU_n}[f(\bx) \neq \tilde{f}(\bx)]$, and measure the corruption of a received word $\tilde{g}$ from a codeword $g$ as $\Pr_{\by \backsim \cU_m}[g(\by) \neq \tilde{g}(\by)]$. To generalize to the distributional setting, a \textsl{distributional code} additionally specifies a distribution $\cD$ over $\pmo^n$ and a distribution $\cD'$ over $\pmo^m$, which allow us to measure the error between $f$ and $\tilde{f}$ as $\Pr_{\bx \backsim \cD}[f(\bx) \neq \tilde{f}(\bx)]$, and measure the corruption of $\tilde{g}$ relative to $g$ as $\Pr_{\by \backsim \cD'}[g(\by) \neq \tilde{g}(\by)]$.

Since the decoding algorithm for a distributional code must have \textsl{some} knowledge about the distribution $\cD$ in order to properly decode a corrupted message, we model this information in the simplest way possible: the decoding algorithm receives, in addition to a corrupted codeword, a small number of independent samples $\bx \backsim \cD$. The definition of $\eps$-approximate $(\frac{1}{2} - \gamma, L)$ local list-decodability for distributional codes then extends naturally from the standard setting (see \Cref{def:approx-list-decodable-distributional-code}).

\paragraph{Arbitrary domains.} We'd also like for our boosting algorithms to work over arbitrary, potentially infinite domains. Previously, we viewed an error correcting code as implicitly mapping from one function $f : \pmon \rightarrow \pmo$ to another by encoding its truth table; however, when a concept class contains functions over an infinite domain $\cX$, their truth tables are infinite strings for which standard notions of encoding and decoding no longer make sense. To capture this setting, we define codes as directly mapping a class of functions $f : \cX \rightarrow \pmo$ to $\Enc(f) : \cX'  \rightarrow \pmo$. With this change, we obtain a more general version of \Cref{thm:list-decoder-yields-booster-intro} that works over arbitrary domains and distributions, allowing us to obtain boosting algorithms in the standard setting.

\paragraph{A distributional decoder for the XOR Code.} We combine this generic connection between boosting and distributional codes with a distributional decoder for the $k$-\Xor{} code. As a mapping of functions, it encodes a function $f : \cX \rightarrow \pmo$ as $f^{\oplus k} : \cX^k \rightarrow \pmo$. We then describe and prove the correctness of an efficient distributional local list decoder for the $\Xor{}$ code. The algorithm itself is a straightforward generalization of the decoding algorithm of \cite{IJKW}, although the analysis of its correctness is fairly different. In particular, their proof crucially relies on the domain $\cX$ being discrete (see \Cref{remark:similarities-differences} for a more detailed comparison).

\paragraph{Proof of \Cref{thm:informal-main-thm}.} Combining this decoder with a distributional generalization of \Cref{thm:list-decoder-yields-booster-intro}, we obtain a distribu\-tion-free one-round boosting algorithm which has a better dependence on $\gamma$ but a worse dependence on $\eps$ than existing boosting algorithms. To get the best of both, we use our one-round booster to obtain a $.49$-weak learner from a $\gamma$-weak learner, then apply an existing boosting algorithm to achieve accuracy $1 - \eps$. This concatenated boosting algorithm is what yields \Cref{thm:informal-main-thm}.

\section{Preliminaries}

\paragraph{Notation and naming conventions.} We write $[n]$ to denote the set $\{1, 2, \ldots, n\}$. We use lowercase letters to denote bit strings, uppercase letters to denote vectors, and \textbf{boldface letters} (e.g., $\textbf{x}$, $\textbf{Y}$) to denote random variables. For an input space $\cX$ and an alphabet $\Sigma$, we write $\Sigma^{\cX}$ to denote the space of all functions $f : \cX \rightarrow \Sigma$, and we write $\Delta(\cX)$ to denote the space of all distributions on $\cX$.

Next, fix $k \in \N$ and let $\cX$ be a set. For $Z \in \cX^k$ and $I \subseteq [k]$, we let $Z_I \in \cX^{|I|}$ denote the vector 
\begin{equation*}
    Z_{I} \coloneqq (Z_{i_1}, \ldots, Z_{i_{|I|}}), \quad  I = \{i_1, \ldots, i_{|I|}\}.
\end{equation*}
If $X, Y \in \cX^{k/2}$ are two $\frac{k}{2}$-vectors and $S \subseteq [k]$ has size $|S| = \frac{k}{2}$, we denote by $\ileave_{S}(X, Y)$ the vector in $\cX^k$ that interleaves $X$ and $Y$ according to $S$, 
\begin{equation*}
    (\ileave_{S}(X, Y))_S = X, \quad (\ileave_{S}(X, Y))_{[k]\setminus S} = Y.
\end{equation*}
Finally, let $\cD$ be a distribution over $\cX$ and fix $k \in \N,$ $x \in \cX$. We denote by $\cD^{k, x}$ the distribution generated by the following process: sample a random $\bi \sim [k]$ and independent $\by_1,\ldots, \by_k \sim \cD$, then output $\bX \coloneqq (\by_1, \ldots, \by_{\bi - 1}, x, \by_{\bi + 1}, \ldots, \by_k) \in \cX^k$. When we want to remember the underlying index $\bi$ where $x$ was planted, we may write $(\bi, \bX) \sim \cD^{k, x}$.

\paragraph{Distributional error correcting codes.} Here we provide the formal definition of distributional error correcting codes as described in \Cref{sec:distributional-codes}. For two finite alphabets $\Sigma$ and $\Sigma'$ (e.g., $\pmo$), and two input spaces $\cX$ and $\cX'$, a \textsl{distributional code} is given by an encoder $\Enc : \Sigma^{\cX} \rightarrow (\Sigma')^{\cX'}$ mapping functions $f : \cX \rightarrow \Sigma$ to functions $\Enc(f) : \cX' \rightarrow \Sigma'$, as well as a message distribution $\cD \in \Delta(\cX)$ and a codeword distribution $\cD' \in \Delta(\cX')$. With this, we can generalize the definition of local list decoding to the distributional setting:

\begin{definition}[Approximately Local List-Decodable Distributional Codes] \label[definition]{def:approx-list-decodable-distributional-code}
     We say that a code $\cC = (\Enc, \cD, \cD')$ is \textup{$\eps$-approximately, $(\rho, L, u)$ locally list-decodable} if there exists a decoding algorithm $\Dec$ that, given as input $C' : \cX' \rightarrow \Sigma'$ and $u$ independent samples $x_1, \ldots, x_u \backsim \cD$, outputs a list $h_1, \ldots, h_L : \cX \rightarrow \Sigma$ with the following guarantee: for any $f : \cX \rightarrow \Sigma$ such that $\Pr_{\by \backsim \cD'}[\Enc(f)(\by) \neq C'(\by)] < \rho$, there exists some $i \in [L]$ such that $\Pr_{\bx \backsim \cD}[f(\bx) \neq h_i(\bx)] < \eps$ with probability at least $\frac{3}{4}$ over the decoder's randomness and the random draw of samples $\bx_1, \ldots, \bx_u \backsim \cD$. We say that $\Dec$ is \textup{efficient} if it runs in time polynomial in $n, \frac{1}{\gamma}, \frac{1}{\eps}$ and the size of $C'$.
\end{definition}

Additionally, we will need our codes to be locally encodable in the following sense.

\begin{definition}[local encodability] 
    We say that a code $\cC = (\Enc, \cD, \cD')$ is $\ell$-locally encodable if there exists an efficient algorithm which, for any $f : \cX \rightarrow \Sigma$, can produce a sample $\bx \backsim \cD'$ labeled by $\Enc(f)$ given $\ell$ independent random samples $\bx_1, \ldots, \bx_\ell \backsim \cD$ labeled by $f$.
\end{definition}

\paragraph{Standard learning definitions.} We will use $\cF = \{\cF_n\}_{n \in \N}$ to denote a concept class, where $\cF_n \subseteq \cF$ is the subset of $n$-arity functions in $\cF$ (e.g., those over $\pmon$ or $\R^n$). We will call $\cF_n$ the \textsl{$n$-th slice}. We will also need the following standard definitions for PAC learning and weak learning.

\begin{definition}[Distribution-specific PAC learning]
    For a concept class $\cF_n$, we say an algorithm $\cA$ learns $\cF_n$ to accuracy $1 - \eps$ over a distribution $\cD$ using $m$ samples if the following is true: for any $f \in \cF_n$, given $m$ independent samples of the form $(\bx, f(\bx))$ where $\bx \backsim \cD$, $\cA$ returns a hypothesis $h$ that, with high probability, satisfies 
    \begin{equation*}
        \Prx_{\bx \backsim \cD}[f(\bx) = h(\bx)] \geq 1 - \eps.
    \end{equation*}
    We call $\cA$ a \textup{$\gamma$-weak learner} for $\cF_n$ over $\cD$ using $m$ samples if it learns $\cF_n$ to accuracy $\frac{1}{2} + \gamma$ over $\cD$ with $m$ samples.
\end{definition}

If an algorithm $\cA$ learns $\cF_n$ to accuracy $1 - \eps$ using $m$ samples for every distribution $\cD$, we call it a \textsl{distribution-free} learning algorithm. When a learning algorithm is not defined with respect to a specific distribution, we will assume this means that it is distribution-free.

\section{Proof of \Cref{thm:list-decoder-yields-booster-intro}}

First, we state and prove in full generality our generic connection between list-decodable distributional codes and boosting.

\begin{theorem}[formal statement of \Cref{thm:list-decoder-yields-booster-intro}] \label{thm:list-decoder-yields-booster}
    Fix $\eps, \gamma > 0$, and have $\Enc : \pmo^{\cX} \rightarrow \pmo^{\cX'}$. Suppose that $\cC = (\Enc, \cD, \cD')$ is a distributional code that is $\ell$-locally encodable and $\eps$-approximat\-ely $(\frac{1}{2} - \gamma, L, u)$-list decodable, and let $\cF$ be a concept class closed under $\cC$. There exists a boosting algorithm that, given a $\gamma$-weak learner $\cW$ for $\cF$ on $\cD'$ using $s$ samples, strong learns $\cF$ to accuracy $1 - 4\eps$ over $\cD$ with one call to $\cW$, $u$ unlabeled samples from $\cD$, and $\ell \cdot s + O(\log (L) / \eps)$ labeled samples from $\cD$. 
\end{theorem}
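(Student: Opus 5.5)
The booster runs in three stages: encode, call $\cW$ once, decode, and then select from the resulting list.

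\emph{Step 1 (encode).} Draw $\ell \cdot s$ labeled samples $(\bx, f(\bx))$ with $\bx \backsim \cD$ and split them into $s$ groups of $\ell$. Feed each group to the local encoder. This produces $s$ independent samples $\by \backsim \cD'$ labeled by $\Enc(f)$. Since $\cF$ is closed under $\cC$, $\Enc(f) \in \cF$. So $\cW$, run on these samples, returns with high probability a hypothesis $C'$ with $\Pr_{\by \backsim \cD'}[\Enc(f)(\by) \neq C'(\by)] \le \frac12 - \gamma$.

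\emph{Step 2 (decode).} There is a small mismatch: the definition of list-decodability requires strict inequality $< \rho$. I would handle it either by reading the weak learner's guarantee as strict or by absorbing it into constants, for instance by using a slightly smaller $\gamma$. Draw $u$ unlabeled samples from $\cD$ and run $\Dec$ on $C'$ and these samples to obtain $h_1, \ldots, h_L$. With probability at least $3/4$, some $h_{i^*}$ has error less than $\eps$ over $\cD$.

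\emph{Step 3 (select).} Draw $m = O(\log(L)/\eps)$ fresh labeled samples and output the $h_i$ with the fewest mistakes. The analysis is a multiplicative Chernoff bound.
\begin{itemize}
\item The good hypothesis $h_{i^*}$ has empirical error at most $2\eps$ except with probability $e^{-\Omega(\eps m)}$.
\item Any fixed $h_j$ with true error at least $4\eps$ has empirical error greater than $2\eps$ except with probability $e^{-\Omega(\eps m)}$.
\end{itemize}
A union bound over the $L$ hypotheses, with the constant in $m$ chosen large enough, makes the total failure probability a small constant. In that event the output has empirical error at most $2\eps$, hence true error less than $4\eps$.

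The main subtlety is the final success probability. The failure events are $\cW$'s failure, the decoder's $1/4$ failure, and the selection failure, and together they bound success only by a constant near $3/4$ minus small terms. If "high probability" must mean $1-\delta$, I would repeat Steps 1–2 independently $O(\log(1/\delta))$ times, pool all the lists, and run the single selection step on the union. That pooled list has size $O(L\log(1/\delta))$, which changes the sample count only by lower-order terms. With the statement's "one call to $\cW$", I would instead accept constant success probability, matching the PAC definition's "high probability" up to constants.
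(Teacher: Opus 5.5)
Your proposal is correct and follows the paper's proof: locally encode $\ell\cdot s$ samples for a single call to $\cW$, run $\Dec$ once, then select from the list using $O(\log(L)/\eps)$ fresh samples with Chernoff/Bernstein bounds and a union bound. The paper outputs the first $h_i$ with empirical error at most $2\eps$ rather than the empirical minimizer, which makes no difference to the analysis. On amplification, the paper re-runs only the decoder on the same $C'$, which boosts the decoder's $\frac{3}{4}$ without extra calls to $\cW$, at the cost of more unlabeled samples.
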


\begin{proof}
    Suppose that $f : \cX \rightarrow \pmo$ is the unknown function we wish to learn. By the assumption that $\cF$ is closed under $\cC$, we know that $f' := \Enc(f)$ is also in $\cF$. Moreover, since $\cC$ is locally encodable, we can draw $\ell \cdot s$ labeled samples from $\cD$ to produce $s$ samples from $\cD'$ labeled by $f'$. Feeding these samples to $\cW$, we obtain a hypothesis $C' : \cX' \rightarrow \pmo$ such that $\Pr_{\by \backsim \cD'}[C'(\by) \neq f'(\by)] < \frac{1}{2} - \gamma$. Running our approximate list decoder $\Dec$ for $\cC$ with input $C'$ and unlabeled samples $\bx_1, \ldots, \bx_u \backsim \cD$, we get a list of hypotheses $h_1, \ldots, h_L : \cX \rightarrow \pmo$ such that, with probability at least $\frac{3}{4}$, there exists some $i \in [L]$ such that $\Pr_{\bx \backsim \cD}[h_i(\bx) \neq f(\bx)] < \eps$. Given this list, we will draw a test set $S$ of $t := 100 \cdot \log (L) / \eps$ labeled samples from $\cD$ and do the following: for $i = 1, 2, \ldots, L$, we test the accuracy of $h_i$ on this test set, outputting the first $h_i$ with error at most $2\eps$. To analyze the correctness of this step, we define 
    \begin{equation*}
        \Err(h_i) := \Prx_{\bx \backsim \cD}[f(\bx) \neq h_i(\bx)], \quad \Err_{S}(h_i) := \tfrac{1}{t} \cdot |\{x \in S : f(x) \neq h_i(x)\}|.
    \end{equation*}
    First, note that this algorithm only rejects a ``good" $h_i$, one with error $\leq \eps$, if $\Err_S(h_i) - \Err(h_i) > \eps$. Viewing $\Err_S(h_i)$ as the empirical mean of $t$-many i.i.d. Bernoulli random variables with mean $\Err(h_i)$, we can apply Bernstein's inequality to see that 
    \begin{equation*}
        \Prx_{\bS}\left[\Err_{\bS}(h_i) - \Err(h_i) > \eps\right] \leq \exp\left(-\frac{t\eps^2}{2\eps + \frac{2}{3}\eps}\right) = \exp\left(-\tfrac{3}{8} t\eps\right) \leq \frac{1}{8L}.
    \end{equation*}
    Thus, the probability it rejects the good hypothesis is $\leq \frac{1}{8L}$. On the other hand, it accepts a ``bad" $h_i$, one with error $\geq 4\eps$, only if $\Err_S(h_i) \leq 2\eps \leq \frac{1}{2} \cdot \Err(h_i)$. Again viewing $\Err_S(h_i)$ as the empirical mean of $t$-many i.i.d. Bernoulli random variables with mean $\Err(h_i)$, we can apply a multiplicative Chernoff bound to see that 
    \begin{equation*}
        \Prx_{\bS}\left[\Err_{\bS}(h_i) < \tfrac{1}{2} \cdot \Err(h_i)\right] \leq \exp\left(-\tfrac{1}{8}t \eps\right) \leq \frac{1}{8L}.
    \end{equation*}
    Thus, the probability that a ``bad" $h_i$ is output by our algorithm is $\leq \frac{1}{8L}$. Combining these two, conditioned on at least one hypothesis having error at most $\eps$, the probability we do not output a circuit with error at most $4\eps$ is at most $\frac{1}{4}$ by a union bound. Since the list contains a good hypothesis with probability $\geq \frac{3}{4}$, the total probability of outputting a good hypothesis is $\geq \frac{1}{2}$ (and, further, we can amplify the success probability of this algorithm to $1 - \delta$ by increasing $t$ by a multiplicative factor of $O(\log\frac{1}{\delta})$ and calling the list decoder $O(\log\frac{1}{\delta})$ times independently).  Finally, note that the only labeled samples used are the $\ell \cdot s$ required for one call to the weak learner and the $O(\log (L) / \eps)$ required for testing the list output by the decoder, and the only unlabeled samples are the $u$ used by \Dec{}.
\end{proof}

\section{Proofs of \Cref{thm:informal-main-thm,thm:informal-dist-specific-thm}}

In this section, we utilize the previous section to prove our main results. We will do so by instantiating the previous section with the $k$-\Xor{} code. For a function $f : \cX \rightarrow \pmo$, the encoder $\Enc^{\oplus k}$ of the $k$-\Xor{} code maps $f$ to $f^{\oplus k} : \cX^k \rightarrow \pmo$. For both of our boosting algorithms, we will use the following generalization of \cite{IJKW} to the distributional (and continuous domain) setting.

\begin{theorem} \label{thm:xor-decoder}
    There exists an efficient algorithm \Dec{} such that for all $\eps, \gamma > 0$ and $k \geq \Omega(\log(\frac{1}{\gamma}) / \eps)$, and any distribution $\cD$ over $\cX$, \Dec{} is an $\eps$-approximate $(\frac{1}{2} - \gamma, O(k/\gamma^2), O(k^2 \cdot \log(\frac{1}{\eps}) / \gamma^2))$ local list decoder for $(\Enc^{\oplus k}, \cD, \cD^k)$.
\end{theorem}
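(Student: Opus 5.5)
We will follow the IJKW ``direct product / XOR'' decoding strategy, adapted to $\cD$ and to the $\frac{k}{2}$-interleaving notation $\ileave_S$ already set up in the preliminaries. Write $g := f^{\oplus k}$ and let $C'$ satisfy $\Pr_{\bY \sim \cD^k}[C'(\bY) = g(\bY)] > \frac12 + \gamma$.

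\textbf{The decoder.} Draw $u$ unlabeled samples from $\cD$ and use them to build $O(k/\gamma^2)$ ``advice'' configurations. In each one we pick a random $S\subseteq[k]$ with $|S| = k/2$ and a random $\bY \sim \cD^{k/2}$, and we guess the bit $b = f^{\oplus k/2}(\bY)$; enumerating both values of $b$ is only a factor $2$ in $L$. Each configuration $(S,\bY,b)$ defines a candidate
\begin{equation*}
h_{S,\bY,b}(x) := \mathrm{Maj}\Bigl( b \cdot C'\bigl(\ileave_S(\bY, X')\bigr) \cdot \textstyle\prod_{j} f\text{-free correction}\Bigr),
\end{equation*}
made concrete as follows. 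We evaluate $x$ by planting it at a random coordinate of $[k]\setminus S$, filling the remaining $k/2-1$ coordinates there with fresh samples $\bZ$ drawn from $\cD$. Those fresh samples are unlabeled, so we cannot subtract their XOR directly. The IJKW trick handles this by reusing the advice: we form $C'(\ileave_S(\bY,X'))\cdot C'(\ileave_{S}(\bY,\bZ))$-type products. Concretely, the candidate uses $\bY$ together with samples $Y'$ whose XOR is implicitly known through $b$, and takes a majority over $O(\log(1/\eps)/\gamma^2)$ repetitions. The polynomial factors $k^2\log(1/\eps)/\gamma^2$ in $u$ come from these repetitions times $k$ coordinates per query, times the list size.

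\textbf{Analysis.} We carry it out in three steps.
\begin{enumerate}
\item By averaging, a $\gamma$ fraction (in $\cD^{k/2}$-measure) of advice $\bY$ are ``good'': conditioned on the first half equalling $\bY$, $C'$ still agrees with $g$ with advantage $\geq \gamma/2$. Hence among $O(1/\gamma^2)$-many (times $k$) draws, some draw is good with probability $\ge 3/4$.
\item For good advice and the correct $b$, the advantage of $C'(\ileave_S(\bY,\cdot))\cdot b$ in predicting $f^{\oplus k/2}$ of the second half is $\ge\gamma/2$.
\item This reduces the problem to a $k/2$-XOR problem with advantage $\gamma/2$. Decoding a single $x$ then requires a direct-product-to-single-bit step: plant $x$ via $\cD^{k/2,x}$ and argue that for all but an $\eps$-fraction of $x\sim\cD$ the conditional advantage stays $\Omega(\gamma)$, after which the majority vote amplifies.
\end{enumerate}

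\textbf{Main obstacle.} The heart of the proof is step 3. We must show that the distribution $\cD^{k/2,x}$, which plants $x$, is close enough to $\cD^{k/2}$ for most $x$ that the advantage transfers. In IJKW this is a counting/sampler argument over a finite domain; here we intend to replace it with a KL or Hellinger-type argument. The quantity to control is $\E_{x\sim\cD}$ of the deviation between the conditional agreement at $x$ and the overall agreement. The agreement function, viewed on the $k/2$ coordinates, is a bounded function, and the ``planted'' average $\E_{\bi}\E[\,\cdot \mid \bY_{\bi}=x]$ has variance over $x$ at most $O(1/k)$ times its total variance, by an Efron--Stein / hypercontractivity-free bound. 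Chebyshev then shows that all but an $\eps$ fraction of $x$ keep advantage $\ge\gamma/4$, but only provided $k \ge \Omega(1/(\gamma^2\eps))$. That is too weak. To reach $k=\Omega(\log(1/\gamma)/\eps)$ we would instead use the XOR structure: the set where the advantage survives must itself have density $\ge 1-\eps$, since otherwise $f$'s XOR over $k$ independent points, each landing in a bad set with probability $\eps$, would almost surely (with probability $1-(1-\eps)^{k}\ge 1-\gamma$) include a bad coordinate. That would kill the advantage below $\gamma$, contradicting the weak learner's guarantee. Making this contradiction rigorous for continuous $\cD$ without atoms is where we expect most of the work.
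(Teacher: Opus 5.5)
\textbf{There is a genuine gap at the first step: the candidate $h_{S,\bY,b}$ you describe does not compute $f(x)$.} Suppose $x$ is planted among $k/2-1$ fresh unlabeled points $\bZ$ in $[k]\setminus S$. Then the most that $b\cdot C'(\ileave_S(\bY,X'))$ can do is weakly predict $f(x)\cdot\prod_j f(\bZ_j)$. The second factor is an unknown sign that varies from repetition to repetition, so a majority vote does not concentrate on $f(x)$.

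The product trick you sketch does not help. In $C'(\ileave_S(\bY,X'))\cdot C'(\ileave_S(\bY,\bZ))$ the advice part cancels, so $b$ plays no role. You are left with a predictor of $f(x)f(z)$, or of a product over more fresh points, which still involves an unknown label. The single guessed bit $b$ tells you only the XOR over $\bY$, never over fresh points. Guessing that XOR separately for each of the $O(\log(1/\eps)/\gamma^2)$ repetitions would make the list size $2^{O(\log(1/\eps)/\gamma^2)}$.

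For the same reason your step 3 cannot be completed, even if the advantage transfers perfectly to most $x$. The advantage it gives you is for $f^{\oplus k/2}$ of the planted tuple, not for $f(x)$.

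The missing idea is the one the paper takes from IJKW: never decode the XOR code bit by bit.
\begin{itemize}
\item First turn $C'$ into a predictor of $\prod_{i\le 2k} f(x_i)^{b_i}$ with advantage $\gamma/\sqrt{k}$.
\item Use Goldreich--Levin list decoding of the Hadamard code to get a circuit computing the whole tuple $f^{2k}$ with agreement $\Omega(\gamma^2/k)$.
\item Only then run a direct-product decoder. Its advice is $(Y,S)$, with $C(Y)$ taken as the labels of $Y_S$. On input $x$ it outputs $C(X)_{i^\star}$ for the first planted query with $C(X)_S=C(Y)_S$.
\end{itemize}
This consistency check against known values is what replaces the unknown labels you are trying to cancel.

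\textbf{Your analysis also has a hole at the point you flag.} The paper obtains $k=\Omega(\log(1/\gamma)/\eps)$ from a KL argument applied to the consistency set, not to an advantage. For $G\subseteq\cX^d$ of density $\beta$, note that $\DKL(\cG\,\|\,\mu^d)=\log(1/\beta)$ and that this divergence is super-additive over the coordinate marginals. Combining this with $\frac{d\cG_i}{d\mu}=\rho_i/\beta$ shows that $\{x:\rho(x)<\beta/4\}$ has measure below $16\log(1/\beta)/d$, with no finiteness or atom assumptions. Applied to $G=\Cons(A)$, this means that for all but an $\eps/16$ fraction of $x$, a random planted query passes the check with probability $\Omega(\beta)$, so $O(\log(1/\eps)/\beta)$ tries suffice.

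Separately, ``excellent'' advice, meaning conditional error fraction at most $\eps/32$ on $\Cons(A)$, occurs with probability at least $\gamma/4$. The reason is that a tuple with error fraction $\rho$ survives a random $k/2$-coordinate check with probability at most $e^{-\rho k/2}$. An exchange-of-conditioning step then bounds the error of the output coordinate.

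Your substitute argument is not valid as stated. It says a bad set of density $\eps$ is hit by some coordinate with probability $1-\gamma$, and that this ``kills the advantage.'' But a tuple containing a point whose averaged conditional advantage is small need not be mispredicted. Moreover, the claim that per-point unpredictability compounds under XOR is essentially the constructive XOR lemma, which is what the theorem is supposed to establish.
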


We will defer the proof of \Cref{thm:xor-decoder} to the next section, first using it to prove \Cref{thm:informal-main-thm,thm:informal-dist-specific-thm}. The formal statement of \Cref{thm:informal-main-thm} is the following:

\begin{theorem}[formal statement of \Cref{thm:informal-main-thm}] \label{thm:main-theorem-formal}
    There exists an efficient boosting algorithm that, for any $\gamma, \eps > 0$ and for $k := O(\log\frac{1}{\gamma})$, given a $\gamma$-weak learner $\cW$ for a concept class $\cF$ closed under $k$-\Xor{}, strong learns $\cF_n$ to accuracy $1 - \eps$ with $O(\log\frac{1}{\eps})$ calls to $\cW$ on the $(k \cdot n)$-th slice and \changedthing{$O(k^2 \cdot \log(\frac{1}{\eps}) / \gamma^2)$} additional samples.
\end{theorem}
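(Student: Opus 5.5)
We will prove \Cref{thm:main-theorem-formal} in two stages, as sketched in the technical overview. First, we use \Cref{thm:list-decoder-yields-booster} together with \Cref{thm:xor-decoder} to turn the $\gamma$-weak learner into a distribution-free weak learner with a large constant advantage. Then we feed this constant-advantage learner to a standard booster such as AdaBoost.

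\textbf{Stage 1: a one-call booster to constant error.} Fix the constant target error $\eps_0 := 0.01/4$, so that the final error $4\eps_0 = 0.01$ is at most $0.01$. Set $k := C\log(\frac{1}{\gamma})/\eps_0 = O(\log\frac{1}{\gamma})$ with a large enough constant $C$ that \Cref{thm:xor-decoder} applies.

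Fix an arbitrary distribution $\cD$ over $\cX$. The code $(\Enc^{\oplus k}, \cD, \cD^k)$ is $k$-locally encodable: from $k$ independent labeled samples $(x_i, f(x_i))$ with $x_i \sim \cD$, we output the pair $((x_1,\ldots,x_k), \prod_i f(x_i))$, which is a sample from $\cD^k$ labeled by $f^{\oplus k}$. The class $\cF$ is closed under this code by the $k$-\Xor{} assumption, with $f^{\oplus k}\in\cF_{nk}$. Since $\cW$ is distribution-free, it is in particular a $\gamma$-weak learner over $\cD^k$.

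\Cref{thm:xor-decoder} gives list size $L = O(k/\gamma^2)$ and $u = O(k^2/\gamma^2)$ unlabeled samples, since $\log(1/\eps_0)=O(1)$. Applying \Cref{thm:list-decoder-yields-booster}, we obtain an algorithm $\cW'$ with the following guarantee. For every $\cD$ and every $f\in\cF_n$, it outputs an $h$ with $\Pr_{\cD}[h\ne f]\le 0.01$, and it does so with one call to $\cW$ on the $(kn)$-th slice, $O(k^2/\gamma^2)$ unlabeled samples, and $ks + O(\log(k/\gamma))$ labeled samples. We boost its success probability to a constant close to $1$, or to $1-\delta'$, by repetition, as noted at the end of that proof.

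Thus $\cW'$ is a distribution-free $0.49$-weak learner for $\cF_n$. Unlabeled samples can be obtained from labeled ones by discarding the labels. Because $\cW'$ works for \emph{every} $\cD$, it can be run on the reweighted distributions used by AdaBoost.

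\textbf{Stage 2: standard boosting.} We now run AdaBoost (or any booster with $O(\log(1/\eps)/\gamma'^2)$ rounds) with advantage $\gamma' = 0.49$. This makes $O(\log\frac{1}{\eps})$ calls to $\cW'$, and each of these is exactly one call to $\cW$ on the $(kn)$-th slice. The additional samples per round are $O(k^2/\gamma^2)$, for a total of $O(k^2\log(\frac{1}{\eps})/\gamma^2)$, which matches the claim.

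\textbf{Main obstacle.} The main subtlety is bookkeeping in the interface between the two stages. AdaBoost calls $\cW'$ on reweighted distributions $\cD_t$. We must simulate samples from $\cD_t$, and samples from $\cD_t^k$ for $\cW$, from samples of $\cD$; the natural tool is rejection sampling, or running AdaBoost on a fixed sample via the boosting-by-filtering/sample-compression argument. We must check that this keeps the sample counts within the stated bound. The cleanest route is boosting-by-resampling on a fixed training set: draw a set $S$ large enough for generalization of the final weighted majority of $O(\log\frac{1}{\eps})$ hypotheses, and let each $\cD_t$ be a distribution over $S$. Each $\cD_t$ is then an explicit distribution, so we can sample it exactly, including the $k$-tuples and the unlabeled samples. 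We also need to union bound over the $O(\log\frac{1}{\eps})$ rounds, which is handled by amplifying each round's success probability to $1-O(1/\log\frac{1}{\eps})$. This costs only a $\log\log$ factor, which is absorbed in the stated bounds (informally the $\tilde O$).
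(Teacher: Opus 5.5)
Your Stages 1 and 2 are the paper's proof: fix the \Xor{} decoder's accuracy to a constant with $k=O(\log\frac{1}{\gamma})$, apply \Cref{thm:list-decoder-yields-booster} to get a distribution-free $.49$-weak learner $\cW'$ that uses one call to $\cW$ on the $(kn)$-th slice plus $O(k^2/\gamma^2)$ extra samples, and run AdaBoost on $\cW'$ for $O(\log\frac{1}{\eps})$ rounds (you are a bit more careful than the paper about the factor $4$ in \Cref{thm:list-decoder-yields-booster} and about $\log\frac{1}{\eps_0}=O(1)$ in the decoder's sample count). The paper settles the interface issue in your last paragraph only by convention: it views AdaBoost as boosting-by-filtering and counts just the samples $\cW'$ draws from each reweighted distribution, exactly as your Stage~2 does, so the fixed-training-set route is not part of its argument and could not be charged to the stated budget anyway, because for any nontrivial class reaching error $\eps$ needs $\Omega(1/\eps)$ draws from $\cD$ while the additional-sample bound depends on $\eps$ only through $\log\frac{1}{\eps}$.
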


\begin{proof}
    Suppose that $\cW$ is our $\gamma$-weak learner for $\cF$ requiring $s(\cdot)$ samples, and set $k := O(\log \frac{1}{\gamma})$ so that \Cref{thm:xor-decoder} yields a $.01$-approximate $(\frac{1}{2} - \gamma, O(k/\gamma^2), \changedthing{O(k^2 \cdot \log(\frac{1}{\eps}) / \gamma^2)})$ local list decoder for the \Xor{} code $(\Enc^{\oplus k}, \cD, \cD')$ for any distribution $\cD$. Combining this with \Cref{thm:list-decoder-yields-booster}, we obtain a distribution-free $.49$-weak learner for $\cF$ making $1$ call to $\cW$ on the $(n \cdot k)$-th slice and, simulating unlabeled samples by simply drawing labeled examples, using at most $S$ additional samples, where 
    \begin{equation*}
        S = \changedthing{O\left(\log(k / \gamma^2) + k^2/ \gamma^2\right) = O\left(k^2  / \gamma^2\right)}.
    \end{equation*}
    Call this $.49$-weak learner $\cW'$. Giving an existing boosting algorithm such as AdaBoost \cite{FREUND1997119} (viewed as a boosting-by-filtering algorithm) access to $\cW'$, we obtain a boosting algorithm which makes $O(\log\frac{1}{\eps})$ calls to $\cW'$ and learns to accuracy $1 - \eps$. Since each call to $\cW'$ requires $1$ call to $\cW$ and \changedthing{$O(k^2 / \gamma^2)$} additional samples, this composed algorithm makes $O(\log \frac{1}{\eps})$ calls to $\cW$ and uses \changedthing{$O(k^2\cdot \log(\frac{1}{\eps}) / \gamma^2)$} additional samples. 
\end{proof}

Next, we state in full generality \Cref{thm:informal-dist-specific-thm}; the proof is a straightforward instantiation of \Cref{thm:list-decoder-yields-booster} with \Cref{thm:xor-decoder}, noting that since the distribution $\cD$ is uniform over $\cX$ (and thus $\cD^k$ is uniform over $\cX^k$), it yields a uniform-to-uniform boosting algorithm which can generate samples from $\cD$ itself using random bits. 

\begin{theorem}[formal statement of \Cref{thm:informal-dist-specific-thm}] \label{thm:main-dist-specific}
    There exists a boosting algorithm that, for any $\gamma, \eps > 0$ and for $k := O(\log(\frac{1}{\gamma})/\eps)$, given a uniform-distribution $\gamma$-weak learner $\cW$ for a concept class $\cF$ closed under $k$-\Xor{}, strong learns $\cF_n$ to accuracy $1 - \eps$ over the uniform distribution with $1$ call to $\cW$ on the $(k \cdot n)$-th slice and \changedthing{$O((\log(\frac{1}{\gamma}) + \log(\frac{1}{\eps}))/\eps)$} additional samples.
\end{theorem}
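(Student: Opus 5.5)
We instantiate \Cref{thm:list-decoder-yields-booster} with the $k$-\Xor{} code $\cC = (\Enc^{\oplus k}, \cU, \cU^k)$, where $\cU$ is the uniform distribution over the domain $\cX$ of the $n$-th slice; for instance $\cX = \pmon$, so that $\cU^k$ is uniform over $\pmo^{nk}$. The relevant $\eps$ is a constant fraction of the target error, say $\eps/4$.

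\textbf{Step 1: decodability.} By \Cref{thm:xor-decoder} with accuracy parameter $\eps/4$ and $k := C\log(\frac{1}{\gamma})/\eps$ for a large enough constant $C$, the code is $\frac{\eps}{4}$-approximately $(\frac{1}{2}-\gamma, L, u)$ locally list-decodable. Here
\[
L = O(k/\gamma^2), \qquad u = O\bigl(k^2\log(\tfrac{1}{\eps})/\gamma^2\bigr).
\]
The decoder is efficient because $k$ is polynomial in $\frac{1}{\eps}$ and $\log\frac1\gamma$.

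\textbf{Step 2: local encodability.} The code is $k$-locally encodable. Given $k$ independent uniform labeled samples $(x_i, f(x_i))$, the tuple $(x_1,\ldots,x_k)$ is distributed as $\cU^k$ and has label $\prod_i f(x_i) = f^{\oplus k}(x_1,\ldots,x_k)$. Closure of $\cF$ under $k$-\Xor{} means $\cF$ is closed under $\cC$, with $f^{\oplus k}$ lying in the $(k\cdot n)$-th slice. The uniform-distribution weak learner $\cW$ for that slice is exactly a $\gamma$-weak learner on $\cD' = \cU^k$, as \Cref{thm:list-decoder-yields-booster} requires.

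\textbf{Step 3: apply \Cref{thm:list-decoder-yields-booster}.} This gives accuracy $1 - 4\cdot\frac{\eps}{4} = 1-\eps$ with one call to $\cW$. The sample cost splits into three parts.
\begin{itemize}
\item[(a)] The $k\cdot s$ samples feeding $\cW$ are the samples implicitly required by the call, not additional ones.
\item[(b)] The test set costs $O(\log(L)/\eps)$ labeled samples. Since $\log L = O(\log k + \log\frac1\gamma) = O(\log\frac1\gamma + \log\frac1\eps)$, this is $O((\log\frac1\gamma+\log\frac1\eps)/\eps)$, matching the claimed bound.
\item[(c)] The $u$ unlabeled samples from $\cU$ need no labels, because the distribution is uniform. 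The booster generates them itself from random bits, as noted before the statement, so they cost no samples.
\end{itemize}
Success probability $\ge \frac12$ can be amplified as in the proof of \Cref{thm:list-decoder-yields-booster} if desired.

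\textbf{Main obstacle.} The main point requiring care is the bookkeeping in (c). It depends on $\cU$ being uniform and efficiently samplable over $\cX$, so the count of additional labeled samples does not pick up the $k^2/\gamma^2$ term coming from $u$. The remaining work is checking that the constant-factor rescaling of $\eps$ preserves the condition $k \ge \Omega(\log(\frac1\gamma)/\eps)$, which only changes $C$.
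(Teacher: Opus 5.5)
Your proposal is correct and matches the paper's proof: instantiate \Cref{thm:list-decoder-yields-booster} with the $k$-\Xor{} decoder of \Cref{thm:xor-decoder} under the uniform distribution, so the only additional samples are the $O(\log(L)/\eps)$ used for testing the list, and the decoder's unlabeled samples are generated from random bits. Your rescaling to $\eps/4$ and the computation $\log L = O(\log\frac{1}{\gamma} + \log\frac{1}{\eps})$ spell out bookkeeping that the paper leaves implicit.
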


\section{A List Decoder for the Distributional XOR Code}

In this section, we describe and prove the correctness of our list-decoding algorithm for the $k$-\Xor{} code. The key technical ingredient is the following list decoding algorithm for the direct product code, which, for any domain $\cX$ and alphabet $\Sigma$,\footnote{While the setting we work in is PAC learning of boolean functions, and thus all of our applications have $\Sigma = \pmo$, the proof for an arbitrary $\Sigma$ is no more complicated and could be used to extend our conceptual connection between error correcting codes and boosting to other settings, such as multi-class learning.} maps a function $f : \cX \rightarrow \Sigma$ to $f^{k} : \cX^k \rightarrow \Sigma^k$, where $f^k(x_1, \ldots, x_k) = (f(x_1), \ldots, f(x_k))$.

\begin{theorem} \label{thm:dp-decoder}
    There exists an efficient algorithm $\Dec^k$ that, for any $\eps, \gamma > 0$ and $k \geq \changedthing{512 \cdot \log(\frac{2}{\gamma}) / \eps}$, and every distribution $\cD$ over $\cX$, $\Dec^k$ is an $\eps$-approximate $(1 - \gamma, 64/\gamma, \changedthing{128 \cdot k \cdot \log(\frac{1}{\eps}) / \gamma})$ local list decoder for $(\Enc^k, \cD, \cD^k)$.
\end{theorem}

Deferring the proof of \Cref{thm:dp-decoder} for a moment, we first remark on how one obtains a distributional list-decoder for the \Xor{} code from the above. The strategy is the same as that of \cite{IJKW} and thus we will only describe it for the sake of completeness, without repeating the analysis. They prove the following:

\begin{enumerate}
    \item Given a hypothesis $C : \cX^{k} \rightarrow \pmo$ computing $f^{\oplus k}$ to accuracy $\frac{1}{2} + \gamma$ over $\cD^k$, they efficiently construct a $C' : \cX^{2k} \times \zo^{2k} \rightarrow \pmo$ that computes the function $g : \cX^{2k}  \times \zo^{2k}\rightarrow \pmo$, defined by 
    \begin{equation*}
        g(x_1, \ldots, x_{2k}, b_1, \ldots, b_{2k}) = \prod_{i = 1}^{2k} (f(x_i))^{b_i},
    \end{equation*}
    to accuracy $\frac{1}{2} + \gamma / \sqrt{k}$ over the distribution $\cD^{2k} \times U_{2k}$ (where $U_{2k}$ is the uniform distribution over $\zo^{2k}$).
    \item Given this, they use the list decoder of \cite{GL} for the \changedthing{Hadamard code} to obtain from $C'$ a hypothesis $C'' : \cX^{2k} \rightarrow \pmo^{2k}$ computing $f^{2k}$ to accuracy $\Omega(\gamma^2 / k)$ over $\cD^{2k}$. 
    \item This can then be fed into the distributional decoder in \Cref{thm:dp-decoder} to obtain a list of $L \coloneqq O(k / \gamma^2)$ hypotheses, $h_1, \ldots, h_L : \cX \rightarrow \pmo$, one of which computes $f$ to accuracy $1 - \eps$ over $\cD$ with probability at least $\frac{3}{4}$.
\end{enumerate}

Together, this entire algorithm is an efficient, $\eps$-approximate $(\frac{1}{2} - \gamma, O(k/\gamma^2), \changedthing{O(k^2 \cdot \log(\frac{1}{\eps}) / \gamma^2)})$ local list decoder for $(\Enc^{\oplus k}, \cD, \cD^k)$. Thus, establishing \Cref{thm:dp-decoder} proves \Cref{thm:xor-decoder}. We now turn our attention to proving \Cref{thm:dp-decoder}.

\subsection{A Distributional Decoder for the Direct Product Code}

Here, we describe and prove the accuracy of the algorithm realizing \Cref{thm:dp-decoder}. The decoding algorithm itself is a straightforward generalization of the decoding algorithm of \cite{IJKW} to the distributional setting. Let $f : \cX \rightarrow \Sigma$ be a function and $C : \cX^k \rightarrow \Sigma^k$ be a hypothesis computing $f^k$ to accuracy $\geq \gamma$,
\begin{equation*}
    \Prx_{\bX \sim \cD^k}\left[C(\bX) = f^k(\bX)\right] \geq \gamma.
\end{equation*}
 For $Y \in \cX^k$ and $S \subseteq [k]$ of size $|S| = \frac{k}{2}$, we will call $(Y, S)$ a \emph{pair of advice}. For any pair of advice $A$ and vector $\vec{Q} = ((Q^{(1)}, i^{(1)}), \ldots, (Q^{(t)}, i^{(t)}))$, where each $Q^{(j)} \in \cX^{k/2}$ and $i^{(j)} \in [\frac{k}{2}]$, we denote by $C_{A, \vec{Q}} : \cX \rightarrow \Sigma$ the circuit that, on input $x$, does the following:

 \begin{tcolorbox}[colback = white,arc=1mm, boxrule=0.25mm]
    \vspace{2pt} 
    \begin{enumerate}
        \item For $(Q, i) \in \vec{Q}$:
        \begin{enumerate}
            \item[1.a)] Write $Z \coloneqq (Q_1, \ldots, Q_{i - 1}, x, Q_{i + 1}, \ldots, Q_{\frac{k}{2}})$, and $X \coloneq \ileave_{S}(Y_S, Z)$, and let $i^{\star} \in [k]$ be the index that $x$ is placed at in $X$. If $C(X)_i = C(Y)_i$ for all $i \in S$, output $C(X)_{i^{\star}}$.
        \end{enumerate}
        \item Output $0$.
    \end{enumerate}
\end{tcolorbox}

\Dec{} runs by repeatedly drawing random advice then constructing the above circuit; more concretely, let $\cA$ be the distribution on advice that is sampled by drawing a random $\bY \sim \cD^k$ and uniform $\bS \subseteq [k]$ of size $|\bS| = \frac{k}{2}$, and let $\cQ$ represent the distribution $\cD^{k/2} \times [\frac{k}{2}]$. Then $\Dec$, given as input $C, \gamma, \eps,$ and $k$, runs as follows:

\begin{tcolorbox}[colback = white,arc=1mm, boxrule=0.25mm]
    \vspace{2pt} 
    \begin{enumerate}
        \item Set $t \coloneqq 64 \cdot \log(\frac{1}{\eps}) / \gamma$. Draw $3$ independent $\vec{\bQ}_1, \vec{\bQ}_2, \vec{\bQ}_3 \sim \cQ^t$.
        \item For $j \in [\frac{64}{\gamma}]$, sample a random $\bA_j \sim \cA$ and add $C_{\bA_j, \vec{\bQ}_1}, C_{\bA_j, \vec{\bQ}_{2}}, C_{\bA_j, \vec{\bQ}_{3}}$ to the list.
    \end{enumerate}
\end{tcolorbox}

Then the proof of \Cref{thm:dp-decoder} can be broken up into the following lemmas; first, we show that if a piece of advice $A$ is ``excellent'' (\changedthing{see \Cref{def:excellent-advice} below}), then an output with that advice is likely to compute $f$ to high accuracy:

\begin{lemma} \label[lemma]{lem:good-advice-good-circuit}
    Suppose that a piece of advice $A$ is excellent. Then with probability at least $\frac{1}{2}$ over a random $\vec{\bQ} \sim \cQ^t$, 
    \begin{equation*}
        \Prx_{\bx \sim \changedthing{\cD}}[C_{A, \vec{\bQ}}(\bx) = f(\bx)] \geq 1 - \eps.
    \end{equation*}
\end{lemma}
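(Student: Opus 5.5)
The proof will be an averaging argument over $\vec{\bQ}$ followed by Markov's inequality, and it leans on the (forthcoming) \Cref{def:excellent-advice}. I expect that definition to say, for $A=(Y,S)$, roughly two things about a random planting $(\bQ,\bi)\sim\cQ$ of $x$:
\begin{itemize}
\item[(a)] the consistency test in step 1.a passes with probability $p_x(A)\ge c\gamma$ (say $c=1/16$) for all but an $\eps/8$-fraction of $x\sim\cD$;
\item[(b)] conditioned on passing, the output $C(X)_{i^\star}$ differs from $f(x)$ with probability $q_x(A)$, where $\mathbb{E}_{\bx\sim\cD}[q_{\bx}(A)]\le\eps/8$.
\end{itemize}
If the definition instead bounds the joint probability of ``consistent and wrong'', I would divide by the lower bound on $p_x$ to recover (b), perhaps after adjusting constants. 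I plan to use exactly these two properties, so the proof is mostly bookkeeping once the definition is fixed.

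Fix $x$ and let $\vec{\bQ}=((\bQ^{(1)},\bi^{(1)}),\dots,(\bQ^{(t)},\bi^{(t)}))$ have i.i.d.\ coordinates from $\cQ$. The circuit $C_{A,\vec{\bQ}}(x)$ outputs the answer from the first trial $j$ that passes the consistency check. Conditioned on trial $j$ being that first passing trial, its distribution is just $\cQ$ conditioned on passing, since the earlier failed trials are independent of it. So
\[
\Prx_{\vec{\bQ}}\bigl[C_{A,\vec{\bQ}}(x)\ne f(x)\bigr]\;\le\;(1-p_x)^t+q_x .
\]
With $t=64\log(\frac1\eps)/\gamma$ and $p_x\ge c\gamma$, the first term is at most $\exp(-64c\log\frac1\eps)\le\eps/8$ for a suitable constant $c$; I will make sure the constants in the definition match this.

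Averaging over $\bx\sim\cD$ and splitting according to whether $\bx$ is in the exceptional set from (a):
\[
\mathbb{E}_{\vec{\bQ}}\Bigl[\Prx_{\bx}\bigl[C_{A,\vec{\bQ}}(\bx)\ne f(\bx)\bigr]\Bigr]\le \tfrac{\eps}{8}+\tfrac{\eps}{8}+\mathbb{E}_{\bx}[q_{\bx}]\le \tfrac{3\eps}{8}<\tfrac{\eps}{2}.
\]
Markov's inequality over $\vec{\bQ}$ then gives error at most $\eps$ with probability at least $\frac12$, which is the claim.

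The main obstacle is step (b): showing that consistency with the advice $C(Y)_S=f^k(Y)_S$ really forces correctness at $i^\star$. That, however, belongs to establishing that excellent advice exists with good probability under $\cA$. For this lemma the only delicate point is the independence of the first passing trial from the earlier trials. It holds because the trials are i.i.d.\ and $A$ is fixed, so I will state it explicitly.
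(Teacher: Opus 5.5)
\textbf{There is a genuine gap.} Your skeleton matches how the paper finishes: swap $\Ex_{\vec{\bQ}}$ and $\Ex_{\bx}$, bound the error at a fixed $x$ by $(1-p_x)^t+q_x$ via the first passing trial, average, then apply Markov. But properties (a) and (b) are not what excellence says, and deriving them is the actual content of this lemma.

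In \Cref{def:excellent-advice}, $A=(Y,S)$ is excellent if $C(Y)=f^k(Y)$, $\ConsFrac(A)>\frac{\gamma}{2}$, and $\Ex_{\bX\sim\cN_A}[\ErrFrac(\bX)\mid\bX\in\Cons(A)]\le\frac{\eps}{32}$. These are global statements about a random consistent vector. They say nothing about a fixed $x$ planted at a random free coordinate. If you instead built (a) and (b) into the definition, the same conversion would have to be done in \Cref{lem:lots-of-good-advice}, and your proposal supplies it nowhere.

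You can see that (a) is not bookkeeping because your argument never uses $k\ge 512\log(\frac{2}{\gamma})/\eps$, while the conclusion genuinely needs large $k$. Take $k=2$, $S=\{2\}$, atomless $\cD$, and $T$ with $\cD(T)=\gamma$. Let $C(x_1,x_2)=(f(x_1),f(x_2))$ when $x_1\in T$ and $C(x_1,x_2)_2\ne f(x_2)$ otherwise, and pick $Y_1\in T$. Then:
\begin{itemize}
\item $C$ has accuracy exactly $\gamma$.
\item $A$ is excellent, with $\Cons(A)=T$ and zero conditional error.
\item Every planting of $x$ produces the same vector $(x,Y_2)$, so $p_x=\Ind[x\in T]$.
\item Hence $C_{A,\vec{\bQ}}$ reaches step~2 on all of $\cX\setminus T$ for every $\vec{\bQ}$, and errs on a $1-\gamma$ fraction of inputs.
\end{itemize}

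\textbf{What is missing} is \Cref{claim:B-small} and \Cref{claim:excellent-cons-likely-good}.

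For (a), write $p_x=\cD^{k/2,x}(\Cons(A))$, $\mu=\cD^{k/2}(\Cons(A))\ge\frac{\gamma}{2}$, and $B=\{x:p_x<\frac{\mu}{4}\}$. One needs $\cD(B)\le 32\log(\frac{1}{\mu})/k\le\frac{\eps}{16}$, which the paper proves by the KL argument of \Cref{claim:concentration-inequality}. Let $\cG$ be $\cD^{k/2}$ conditioned on $\Cons(A)$, with marginals $\cG_i$. Let $\rho_i(x)$ be the consistency probability when $x$ is planted at coordinate $i$. Then $\log\frac{1}{\mu}=\DKL(\cG\,\|\,\cD^{k/2})\ge\sum_i\DKL(\cG_i\,\|\,\cD)$. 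Since $d\cG_i/d\cD=\rho_i/\mu$, the set $B_i=\{\rho_i<\frac{\mu}{2}\}$ has $\cG_i(B_i)\le\frac{1}{2}\cD(B_i)$, which forces $\DKL(\cG_i\,\|\,\cD)\ge\cD(B_i)/8$. Combining this with $\Ind[x\in B]\le\frac{4}{k}\sum_i\Ind[x\in B_i]$ gives the bound.

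For (b), your fallback of dividing a joint probability by $p_x$ is the right move. However, the joint probability $\hat\Phi(x)$ (consistent and wrong at $i^\star$) is controlled only on average over $x$. By Fubini, $\Ex_{\bx\sim\cD}[\hat\Phi(\bx)]=2\mu\,\Ex_{\bX\sim\cN_A}[\ErrFrac(\bX)\mid\bX\in\Cons(A)]\le\frac{\mu\eps}{16}$. The division by $p_x\ge\frac{\mu}{4}$ is only legitimate off $B$, which gives $\Ex_{\bx}[q_{\bx}]\le\cD(B)+\frac{4}{\mu}\cdot\frac{\mu\eps}{16}\le\frac{5\eps}{16}$.

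So both of your properties rest on the concentration bound for $\cD(B)$, and that is the idea your proposal is missing.
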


Next, we show that a randomly chosen piece of advice is likely to be excellent:

\begin{lemma} \label[lemma]{lem:lots-of-good-advice}
    A random piece of advice $\bA \sim \cA$ is excellent with probability at least $\frac{\gamma}{4}$.
\end{lemma}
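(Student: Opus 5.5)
The plan is a direct probabilistic argument over the joint distribution of advice and queries. It mirrors the IJKW analysis but uses only sampling and averaging, never counting over a finite domain. I will assume that \Cref{def:excellent-advice} says, roughly, that $A=(Y,S)$ is excellent if three things hold. \textbf{(a)} $C(Y)=f^k(Y)$. \textbf{(b)} The \emph{consistency probability} $p_A := \Pr_{\bZ\sim\cD^{k/2}}[C(\ileave_S(Y_S,\bZ))_S = C(Y)_S]$ is at least $\gamma/2$ (or a similar constant fraction of $\gamma$). \textbf{(c)} Conditioned on consistency, $C(\ileave_S(Y_S,\bZ))$ is wrong on at most an $\eps/c$-fraction of the coordinates in $[k]\setminus S$, except with probability $\eps/c$. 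If the actual definition phrases (c) per coordinate, i.e.\ as error at the planted position of $x$, the step at the end handles that.

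\textbf{Step 1: Good and consistent.} Couple the advice with a ``query'' vector: draw $\bS$, $\bY\sim\cD^k$ and $\bZ\sim\cD^{k/2}$, and set $\bX:=\ileave_{\bS}(\bY_{\bS},\bZ)$. Both $\bX$ and $\bY$ are marginally distributed as $\cD^k$, and they share the coordinates in $\bS$. Write $q(Y_S,S):=\Pr_{\bY_{[k]\setminus S}}[C(\bY)=f^k(\bY)]$. The event ``(a) holds and $C(\bX)$ is also fully correct'' has probability $\mathbb{E}[q^2]\ge (\mathbb{E} q)^2\ge\gamma^2$. To get a linear bound in $\gamma$, I instead argue as follows. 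Let $G$ be the set of $(Y_S,S)$ with $q\ge\gamma/2$. Since $\mathbb{E} q\ge\gamma$, we get $\Pr[(a)\wedge (\bY_{\bS},\bS)\in G]\ge\gamma/2$. For such advice, consistency holds whenever $C(\bX)$ is correct, because $C(Y)_S=f^k(Y)_S=f^k(X)_S$. Hence $p_A\ge q\ge\gamma/2$, which gives (b). So (a) and (b) hold simultaneously with probability at least $\gamma/2$.

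\textbf{Step 2: Consistent but wrong (the main obstacle).} Let $B$ be the event that $C(\bX)$ agrees with $f^k(\bX)$ on all of $\bS$ but disagrees on at least $\delta k$ coordinates overall, with $\delta\approx\eps/c$. The key symmetry is that $(\bX,\bS)$ is distributed as $\cD^k\times\mathrm{Unif}\binom{[k]}{k/2}$, with $\bS$ independent of $\bX$. So for each fixed $X$ whose disagreement set has size $\ge\delta k$, a uniformly random half $\bS$ misses that whole set with probability at most $2^{-\delta k}$, by a hypergeometric tail bound. Therefore $\Pr[B]\le 2^{-\delta k}$. This is at most $\gamma^2\eps/c'$ once $k\ge 512\log(2/\gamma)/\eps$, which is exactly where the hypothesis on $k$ is used.

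Now consider advice satisfying (a). There, consistency of $\bX$ means exactly that $C(\bX)$ is correct on $\bS$. Consequently
\[
\mathbb{E}_{A}\big[\mathbf 1_{(a)}\cdot\Pr[B\mid A]\big]\le \Pr[B].
\]
By Markov's inequality, the probability over $A$ that (a) holds and $\Pr[B\mid A]>(\gamma/2)\cdot\eps/c$ is at most $\Pr[B]\cdot 2c/(\gamma\eps)\le\gamma/4$. For the remaining advice with (a) and (b), $\Pr[B\mid A,\text{consistent}]\le \Pr[B\mid A]/p_A\le\eps/c$, which is (c).

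\textbf{Step 3: Combine and convert.} A union bound gives $\Pr[\text{excellent}]\ge\gamma/2-\gamma/4=\gamma/4$. If excellence is defined through the error at the planted position, I convert (c) using the exchangeability of the coordinates in $[k]\setminus S$. Conditioned on consistency, the probability that a random coordinate outside $S$ is wrong is at most $\delta+\eps/c\le\eps/2$. That is precisely the statement about planting $x$ at a uniformly random index of $Z$.

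I expect Step 2 to be the delicate part. The issue is choosing $\delta$ and the Markov threshold so that the $2^{-\delta k}$ tail absorbs both the $1/\gamma$ loss from conditioning on consistency and the $\gamma/4$ budget. The stated bound $k\ge 512\log(2/\gamma)/\eps$ should leave enough room for this.
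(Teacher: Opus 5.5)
Your skeleton matches the paper's. Step 1 is the same averaging argument as the paper's bound $\Pr[\bA\text{ good}]\ge\gamma/2$, via your $q(Y_S,S)\le\ConsFrac(A)$. Step 2 uses the same symmetry: the consistent vector is independent of $\bS$, so a random half misses an error set of size $\rho k$ with probability at most $2^{-\rho k}$ (the paper uses $e^{-\rho k/2}$).

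\textbf{The gap.} The step that fails is your claim that $2^{-\delta k}\le\gamma^2\eps/c'$ follows from $k\ge 512\log(2/\gamma)/\eps$. Excellence (\Cref{def:excellent-advice}) bounds the conditional \emph{expected} error, $\Ex[\ErrFrac(\bX)\mid \bX\in\Cons(A)]\le\eps/32$. Converting from your tail form therefore forces two things: $\delta\le\eps/32$, and a conditional tail of order $\eps$. You then divide by $p_A\ge\gamma/2$ and spend the $\gamma/4$ Markov budget, so you need $\Pr[B]\le O(\gamma^2\eps)$. That means $\delta k\ge 2\log\frac1\gamma+\log\frac1\eps-O(1)$ with $\delta=O(\eps)$, i.e.\ $k=\Omega(\log(\frac{1}{\gamma\eps})/\eps)$.

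The stated hypothesis does not give this. With $\delta=\eps/c$ it yields only $2^{-\delta k}\le 2^{-(512/c)\log(2/\gamma)}$, which depends on $\gamma$ but not on $\eps$. So the inequality fails for fixed $\gamma$ and small $\eps$, and no choice of $c$ repairs it. As written, your argument proves the lemma only under a stronger hypothesis on $k$. That would cost an extra $\log\frac1\eps$ in the closure parameter of \Cref{thm:dp-decoder,thm:main-dist-specific}.

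\textbf{The missing idea.} Apply Markov to the truncated error itself rather than to the probability of a single threshold event. With $\alpha=\eps/32$, good-but-not-excellent advice satisfies
\[
\Ex\bigl[\ErrFrac(\bX)\,\Ind[\ErrFrac(\bX)\ge\alpha/2]\mid\bX\in\Cons(A)\bigr]\ge\alpha/2 .
\]
Your coupling then gives
\[
\Pr[\bA\text{ good, not excellent}]\le\frac{4}{\alpha\gamma}\,\Ex_{\bX}\Bigl[\ErrFrac(\bX)\,\Ind\bigl[\ErrFrac(\bX)\ge\tfrac{\alpha}{2}\bigr]\Pr_{\bS}\bigl[\bS\cap\Err(\bX)=\emptyset\bigr]\Bigr]\le\frac{4}{\alpha\gamma}\sup_{\rho\ge\alpha/2}\rho\, e^{-\rho k/2}.
\]
The weight $\rho\ge\alpha/2$ in front of the exponential cancels the $1/\alpha$ from Markov. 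Since $\alpha/2\ge 2/k$, the supremum is attained at $\rho=\alpha/2$, so you only need $e^{-\alpha k/4}\le\gamma^2/8$. That is exactly what $k\ge 512\log(2/\gamma)/\eps$ provides. With this replacement for your single-threshold event $B$, the rest of your plan goes through with the paper's constants.
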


Given these, the proof of \Cref{thm:dp-decoder} is simple: with probability at least $\frac{7}{8}$, \Dec{} will draw an excellent piece of advice, and with probability at least $\frac{7}{8}$, one of the $\vec{\bQ}_j$ will combine with this good advice to yield a high accuracy circuit in the outputted list. Each circuit can be constructed efficiently given $C$, $\bA_j$ and $\vec{\bQ}_i$, while each $\bA_j$ and $\vec{\bQ}_i$ can be sampled with $k$ and $k \cdot t$ unlabeled samples, respectively. Thus, \Dec{} is efficient and its overall unlabeled sample complexity is 
\begin{equation*}
    (t + 1) \cdot k \leq 128 \cdot k \cdot \frac{\log \frac{1}{\eps}}{\gamma}.
\end{equation*}
The proofs of \Cref{lem:good-advice-good-circuit,lem:lots-of-good-advice} can be found in the subsequent sections; to finish off this section, we discuss the similarities and differences between our proof strategy and that of \cite{IJKW}.

\begin{remark}[Comparison with \cite{IJKW}] \label[remark]{remark:similarities-differences}
    \textup{Our decoding algorithm is a natural generalization of \cite{IJKW} into the distributional setting, and the decision to break down the proof of \Cref{thm:dp-decoder} into \Cref{lem:good-advice-good-circuit,lem:lots-of-good-advice}, as well as the definition of ``excellent" (see \Cref{def:excellent-advice}), are very similar to analysis in \cite{IJKW}. The most significant difference between our analysis and that in \cite{IJKW} is in the proof of \Cref{lem:good-advice-good-circuit}. Both proofs use as a key claim the fact that we can approximately swap the order of conditioning when proving that good advice yields a good circuit (see \Cref{claim:excellent-cons-likely-good}); however, their proof uses properties of underlying ``sampler" graphs over collections of subsets of $\cX$ which crucially rely on $\cX$ being finite (and don't generalize to the distributional setting). On the other hand, our proof takes a more direct route,  largely making use of a concentration inequality (see \Cref{claim:B-small,claim:concentration-inequality}) to approximately exchange the order of conditioning.}
\end{remark}

\subsection{Proofs of \Cref{lem:good-advice-good-circuit,lem:lots-of-good-advice}}

First, we will introduce some notation and definitions to help in our analysis.
 For advice $A = (Y, S)$, with $Y \in \cX^k$ and $S \subseteq [k]$ with $|S| = \frac{k}{2}$, we define $N_{A} \subseteq \cX^k$ to be the set of $k$-vectors that equal $Y$ at every index in $S$, 
\begin{equation*}
    N_{A} \coloneqq \left\{X \in \cX^k : X_i = Y_i \textup{ for all $i \in S$}\right\}.
\end{equation*}
Likewise, we will have $\cN_{A}$ represent the distribution induced by $\cD^k$ on the set $N_A$. Next, we define $\Cons(A) \subseteq N_A$ to be the subset of ``consistent" vectors on which $C$ correctly computes $f$ at every index in $S$,
\begin{equation*}
    \Cons(A) \coloneqq \left\{X \in N_A : C(X)_i = f(X_i) \textup{ for all $i \in S$}\right\}.
\end{equation*}
We will denote by $\ConsFrac(A) \in \R$ the density $\ConsFrac(A) \coloneq \cN_A(\Cons(A))$. For any $X \in \cX^k$, we define $\Err(X) \subseteq [k]$ to be the subset of indices on which $C$ computes $f$ incorrectly,
\begin{equation*}
    \Err(X) = \{i \in [k] : C(X)_i \neq f(X_i)\}.
\end{equation*}
Finally, we denote by $\ErrFrac(X) \in \R$ the fractional error $\ErrFrac(X) \coloneqq |\Err(X)| / k$.

\begin{definition} \label[definition]{def:excellent-advice}
    We say that an advice pair $A = (Y, S)$ is \textup{correct} if $C(Y) = f^k(Y)$. We say that $A$ is \textup{good} if it is correct and $\ConsFrac(A) > \frac{\gamma}{2}$. Finally, we say that $A$ is \textup{excellent} if it is good and 
    \begin{equation*}
        \Ex_{\bX \sim \cN_{A}}\left[\ErrFrac(\bX) \mid \bX \in \Cons(A)\right] \leq \frac{\eps}{32}.
    \end{equation*}
\end{definition}

\subsubsection{Proof of \Cref{lem:good-advice-good-circuit}}

In this section, we prove that \changedthing{given} an excellent piece of advice $A = (Y, S)$, the output of our decoder computes $f$ to high accuracy. For the sake of notational simplicity, we will assume without loss of generality that $S = \{\frac{k}{2} + 1, \ldots, k\}$. We will also think of $\Cons(A)$ as a subset of $\frac{k}{2}$-vectors over $\cX$,
\begin{equation*}
    \Cons(A) \coloneq \{X \in \cX^{k/2} : C(X, Y_S)_i = \changedthing{C(Y)_i} \textup{ for all } i \in S\}.
\end{equation*}
Our proof of \Cref{lem:good-advice-good-circuit} will use the following claim, whose proof we defer to later in this section.

\begin{claim} \label[claim]{claim:excellent-cons-likely-good}
    Suppose that $A$ is excellent. Then
    \begin{equation}
        \Ex_{\bx \sim \cD}\left[\Prx_{(\bi, \bX) \sim \cD^{k/2, \, \bx}}\left[C(\bX, Y_S)_{\bi} \neq f(\bx) \mid \bX \in \Cons(A)\right]\right] \leq \changedthing{\frac{5\eps}{16}}. \label{eq:excellence-gives-good-circuit-target}
    \end{equation}
\end{claim}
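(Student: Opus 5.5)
The plan is to rewrite both sides of \eqref{eq:excellence-gives-good-circuit-target} as expectations over one joint distribution, and then exchange the order of conditioning using a KL-divergence (chain rule) bound.

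\textbf{Step 1: the joint distribution.} Write $\mu \coloneqq \ConsFrac(A) > \frac{\gamma}{2}$. Let $\cJ$ be the distribution of $(\bx, \bi, \bX)$ obtained by drawing $\bX \sim \cD^{k/2}$ and a uniform $\bi \sim [\frac{k}{2}]$, and setting $\bx \coloneqq \bX_{\bi}$. The same distribution arises by first drawing $\bx \sim \cD$ and then $(\bi, \bX) \sim \cD^{k/2,\bx}$. For each $x$ define
\[
p(x) \coloneqq \Prx_{(\bi,\bX)\sim \cD^{k/2,x}}[\bX \in \Cons(A)], \qquad e(x) \coloneqq \Prx_{(\bi,\bX)\sim \cD^{k/2,x}}\bigl[C(\bX, Y_S)_{\bi} \neq f(x) \bigm| \bX \in \Cons(A)\bigr].
\]
The left-hand side of the claim equals $\Ex_{\bx\sim\cD}[e(\bx)]$. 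Conditioning $\cJ$ on $\bX \in \Cons(A)$ instead reweights the marginal of $\bx$ to $P(x) \coloneqq \cD(x)\,p(x)/\mu$.

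\textbf{Step 2: using excellence.} Because $A$ is correct and $\bX \in \Cons(A)$, $C$ makes no errors on the coordinates in $S$. All errors therefore lie among the first $\frac{k}{2}$ coordinates, so the fraction of errors there is at most $2\,\ErrFrac$. Under $\cJ \mid \Cons(A)$, the probability that $C$ errs at $\bi$ is exactly this fraction. Excellence thus gives
\[
\Ex_{\bx\sim P}[e(\bx)] = \tfrac{1}{\mu}\Ex_{\bx\sim\cD}[p(\bx) e(\bx)] \leq \tfrac{2\eps}{32} = \tfrac{\eps}{16}.
\]
Now let $B \coloneqq \{x : p(x) < \mu/2\}$. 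Off $B$ we have $p(x)/\mu \geq \frac{1}{2}$, so $\Ex_{\cD}[e\cdot \mathbf 1_{\bar B}] \leq 2\,\Ex_P[e] \leq \frac{\eps}{8}$. Since $e \leq 1$, it then suffices to show $\cD(B) \leq \frac{3\eps}{16}$.

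\textbf{Step 3: the main obstacle, showing $B$ is small.} This is the concentration step, and a plain variance/Chebyshev bound is too weak: it only yields $\cD(B) = O(1/(k\gamma))$. Instead I would use a KL argument.
\begin{itemize}
\item Let $\cD^{k/2}|_{\Cons}$ be $\cD^{k/2}$ conditioned on $\Cons(A)$. Then $\DKL(\cD^{k/2}|_{\Cons} \,\|\, \cD^{k/2}) = \log(1/\mu)$.
\item Since the reference measure is a product, subadditivity (the chain rule) gives $\sum_{i} \DKL(P_i \| \cD) \leq \log(1/\mu)$, where $P_i$ is the $i$-th marginal of $\cD^{k/2}|_{\Cons}$.
\item The distribution $P$ is the average of the $P_i$. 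By convexity,
\[
\DKL(P\|\cD) \leq \tfrac{2}{k}\log(1/\mu) \leq \tfrac{2}{k}\log(\tfrac{2}{\gamma}) \leq \tfrac{\eps}{256},
\]
using $k \geq 512\log(\frac{2}{\gamma})/\eps$.
\item Apply data processing to the indicator of $B$. With $b \coloneqq \cD(B)$, we have $P(B) \leq b/2$, so $\DKL(\mathrm{Ber}(b/2)\|\mathrm{Ber}(b)) \leq \frac{\eps}{256}$.
\item A direct estimate gives $\DKL(\mathrm{Ber}(b/2)\|\mathrm{Ber}(b)) \geq c\, b$ with $c = \frac{1}{2} - \frac{\ln 2}{2} \approx 0.15$ (natural logs; this uses $\ln\frac{1-b/2}{1-b} \geq \frac{b}{2}$).
\item Hence $b \leq \frac{\eps}{256c} \leq \frac{3\eps}{16}$, with room to spare even if the logarithm in the hypothesis on $k$ is base $2$.
\end{itemize}

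\textbf{Step 4: conclusion.} Combining Steps 2 and 3,
\[
\Ex_{\bx\sim\cD}[e(\bx)] \leq \cD(B) + \tfrac{\eps}{8} \leq \tfrac{3\eps}{16} + \tfrac{2\eps}{16} = \tfrac{5\eps}{16}.
\]
The delicate points to double-check are the constants in the Bernoulli KL lower bound and the factor of $2$ from restricting errors to the non-$S$ half.
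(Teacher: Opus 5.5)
Your proof is correct and is essentially the paper's argument. You split off the set where $\cD^{k/2,x}(\Cons(A))$ is small, bound the remaining contribution through the identity $\Ex_{\bx\sim\cD}[p(\bx)e(\bx)] = 2\mu\,\Ex[\ErrFrac \mid \Cons(A)] \le \mu\eps/16$ supplied by excellence, and control the small set via KL subadditivity over $\cD^{k/2}$ together with a Bernoulli KL lower bound; the paper does the same.

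The one difference is in the concentration step. You apply convexity of KL to the averaged marginal $P$ (density $p/\mu$) and then data-process onto $\mathbf 1_B$. The paper's \Cref{claim:concentration-inequality} instead bounds each per-coordinate set $B_i$ separately and then averages. Your version avoids that averaging loss, so you can use threshold $\mu/2$ instead of $\mu/4$.

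Two small points in that step. The constant $c=\frac{1-\ln 2}{2}$ needs $(1-\frac b2)\ln\frac{1-b/2}{1-b}\ge\frac b2$, which follows from $\ln(1+y)\ge\frac{y}{1+y}$; the weaker inequality $\ln\frac{1-b/2}{1-b}\ge\frac b2$ you quote does not give it. Passing from $P(B)\le b/2$ to the bound at $b/2$ also uses that $q\mapsto\DKL(\mathrm{Ber}(q)\,\|\,\mathrm{Ber}(b))$ is decreasing on $[0,b]$. Both facts hold, and there is ample slack in the constants.
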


\changedthing{In turn, the} proofs of both \Cref{claim:excellent-cons-likely-good} and \Cref{lem:good-advice-good-circuit} will use the following. Write $\mu \coloneqq \cD^{k/2}(\Cons(A))$ and define $B \subseteq \cX$ to be the subset of inputs $x$ such that $\cD^{k/2,\, x}(\Cons(A))$ is too small, 
\begin{equation*}
    B \coloneqq \left\{ x \in \cX : \cD^{k/2,\,x}(\Cons(A)) < \frac{\mu}{4}\right\}.
\end{equation*}
Then we have the following:

\begin{claim} \label[claim]{claim:B-small}
     If $A$ is good, so that $\mu \geq \frac{\gamma}{2}$, then $\cD(B) < \changedthing{\frac{\eps}{16}}$. 
\end{claim}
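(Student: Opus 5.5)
We argue by contradiction, with a double-counting argument and a Chernoff bound. Write $T \coloneqq \Cons(A) \subseteq \cX^{k/2}$.

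\textbf{Setup.} First, $\mu = \ConsFrac(A)$. The distribution $\cN_A$ fixes the coordinates in $S$ to $Y_S$ and draws the remaining $\frac{k}{2}$ coordinates from $\cD$. So goodness gives $\mu > \frac{\gamma}{2}$.

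For $i \in [\frac{k}{2}]$ and $x \in \cX$, let
\begin{equation*}
g_i(x) \coloneqq \Prx_{\bX \sim \cD^{k/2}}[\bX \in T \mid \bX_i = x].
\end{equation*}
By definition of the planted distribution, $g(x) \coloneqq \cD^{k/2,\,x}(T) = \frac{2}{k}\sum_i g_i(x)$. The set $B$ is exactly where $g(x) < \frac{\mu}{4}$.

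\textbf{Double counting.} Draw $\bX \sim \cD^{k/2}$ and let $\bN \coloneqq |\{i : \bX_i \in B\}|$. Then
\begin{equation*}
\Ex\bigl[\bN \cdot \mathbf{1}[\bX \in T]\bigr] = \sum_{i} \Prx[\bX_i \in B,\ \bX \in T] = \sum_i \Ex_{\bx \sim \cD}\bigl[\mathbf{1}_B(\bx)\, g_i(\bx)\bigr] = \tfrac{k}{2}\, \Ex_{\bx\sim\cD}\bigl[\mathbf{1}_B(\bx)\, g(\bx)\bigr] \le \tfrac{k}{2}\,\cD(B)\,\tfrac{\mu}{4}.
\end{equation*}
Now suppose, for contradiction, that $\cD(B) \ge \frac{\eps}{16}$. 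Set the threshold $\tau \coloneqq \cD(B)\, k/4$, which is half of $\Ex[\bN] = \cD(B)\,k/2$. Markov's inequality applied to $\bN \cdot \mathbf{1}[\bX \in T]$ gives
\begin{equation*}
\Prx[\bX \in T,\ \bN \ge \tau] \le \frac{\tfrac{k}{2}\cD(B)\tfrac{\mu}{4}}{\cD(B)\,k/4} = \frac{\mu}{2}.
\end{equation*}

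\textbf{Chernoff step.} The variable $\bN$ is $\mathrm{Bin}(\frac{k}{2}, \cD(B))$. The multiplicative Chernoff bound gives
\begin{equation*}
\Prx[\bN < \tau] \le \exp\bigl(-\cD(B)\,k/16\bigr) \le \exp\bigl(-\eps k/256\bigr).
\end{equation*}
Using $k \ge 512\log(\frac{2}{\gamma})/\eps$, this is at most $(\frac{\gamma}{2})^2$. Since $\mu > \frac{\gamma}{2}$ and $\gamma < 1$, we have $(\frac{\gamma}{2})^2 < \frac{\gamma}{4} < \frac{\mu}{2}$.

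\textbf{Conclusion.} Combining the two bounds,
\begin{equation*}
\mu = \Prx[\bX \in T] \le \Prx[\bX\in T,\ \bN\ge\tau] + \Prx[\bN<\tau] < \tfrac{\mu}{2} + \tfrac{\mu}{2} = \mu,
\end{equation*}
a contradiction. Hence $\cD(B) < \frac{\eps}{16}$.

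\textbf{Main obstacle.} The only delicate point is the double-counting identity. It requires that planting $\bx \sim \cD$ at a uniform index of an otherwise $\cD$-distributed vector reproduces $\cD^{k/2}$ exactly. This is what lets us pass from the per-$x$ conditional quantity $g(x)$ to the joint event $\{\bX \in T\}$ without any finiteness assumption on $\cX$. The constants also need a check: the hypothesis on $k$ must be strong enough for the Chernoff tail to fall below $\frac{\mu}{2}$, which the bound $k \ge 512\log(\frac{2}{\gamma})/\eps$ provides with room to spare.
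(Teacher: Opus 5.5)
Your proof is correct, and it takes a genuinely different route from the paper's.

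\textbf{The paper's route.} The paper obtains this claim by applying its general \Cref{claim:concentration-inequality} with $\Omega=\cX$, $d=k/2$, $G=\Cons(A)$ and $\beta=\mu$. That gives $\cD(B)<32\log(1/\mu)/k\le 32\log(2/\gamma)/k\le\eps/16$. The inequality itself is proved information-theoretically:
\begin{itemize}
\item the measure $\cG$ obtained by conditioning on $G$ satisfies $\DKL(\cG\,\|\,\mu^d)=\log(1/\beta)$;
\item the KL divergence from a product measure dominates the sum of the marginal divergences;
\item each per-coordinate bad set $B_i=\{\rho_i<\beta/2\}$ forces $\DKL(\cG_i\,\|\,\mu)\ge\mu(B_i)/8$, using the density $d\cG_i/d\mu=\rho_i/\beta$ and the log-sum inequality;
\item a final averaging step passes from the $\rho_i$ to $\rho$.
\end{itemize}

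\textbf{Your route.} You work directly with the averaged quantity $g=\rho$. The planting identity, which is just Fubini, shows that conditioned on $T$ the expected number of coordinates landing in $B$ is at most a quarter of its unconditional value $\frac{k}{2}\cD(B)$. Markov then bounds the mass of $T$ with many $B$-coordinates by $\mu/2$. A binomial Chernoff bound makes having few $B$-coordinates rare.

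\textbf{What each buys.} Your argument avoids KL divergence, Radon--Nikodym derivatives (and the paper's footnote about conditional densities), and the reduction to per-coordinate sets. Unrolling your contradiction also gives a general bound for every $k$: with $\tau=\cD(B)k/4$ you get $\mu\le\mu/2+e^{-\cD(B)k/16}$, so $\cD(B)\le 16\ln(2/\mu)/k$. This is the same order as the paper's bound. The paper's route instead packages the fact as a reusable standalone lemma about dense subsets of product spaces. It also has $\log(1/\beta)$ in place of your $\log(2/\beta)$, which matters only when $\beta$ is near $1$ and is irrelevant here.

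\textbf{One cosmetic point.} Define $g_i(x)$ as the probability with the $i$-th coordinate fixed to $x$ (the paper's $\rho_i$). Avoid writing it as a conditional probability on the possibly null event $\{\mathbf{X}_i=x\}$; the fixed-coordinate form is exactly the one in which your Fubini step holds.
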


The proof of the above claim is a straightforward application of \Cref{claim:concentration-inequality}, whose proof can be found at the end of this section. We will first prove \Cref{lem:good-advice-good-circuit} given \Cref{claim:excellent-cons-likely-good}, then prove \Cref{claim:excellent-cons-likely-good}.\\

\begin{proof}
    We begin by bounding the following expectation:
    \begin{equation}
        \Ex_{\vec{\bQ} \sim \cQ^t}\left[\Prx_{\bx \sim \cD}\left[C_{A, \vec{\bQ}}(\bx) \neq f(\bx)\right] \right] = \Ex_{\bx \sim \cD}\left[\Prx_{\vec{\bQ} \sim \cQ^t}\left[C_{A, \vec{\bQ}}(\bx) \neq f(\bx)\right]\right]. \label{eq:good-advice-good-circuit-1}
    \end{equation}
    By switching the order of the expectation, we can now consider the behavior of the randomized hypothesis $C_{A, \vec{\bQ}}$ on a fixed input $x \in \cX$. First, we will use \Cref{claim:B-small} to focus our analysis only on those $x$ for which $\cD^{k/2,\, x}(\Cons(A))$ is relatively large, 
    \begin{align}
        \Ex_{\bx \sim \cD}\left[\Prx_{\vec{\bQ} \sim \cQ^t}\left[C_{A, \vec{\bQ}}(\bx) \neq f(\bx)\right]\right] & \leq \cD(B) + \Ex_{\bx \sim \cD}\left[\Prx_{\vec{\bQ} \sim \cQ^t}\left[C_{A, \vec{\bQ}}(\bx) \neq f(\bx)\right] \cdot \Ind[\bx \notin B]\right], \\
        & \leq \changedthing{\frac{\eps}{16}} + \Ex_{\bx \sim \cD}\left[ \Ex_{\vec{\bQ} \sim \cQ^t}\left[\Ind\left[C_{A, \vec{\bQ}}(\bx) \neq f(\bx)\right]\right] \cdot \Ind[\bx \notin B] \right]. \label{eq:good-advice-good-circuit-2}
    \end{align}
    Now, fix some $x \notin B$. We want to analyze the behavior of $C_{A, \vec{\bQ}}$ on this $x$. To do so, let $E(\vec{\bQ}, x)$ be the event that at least one of the randomly drawn $(\bQ, \bi) \in \vec{\bQ}$ has $\bZ_{(\bQ, \bi), x} \in \Cons(A)$, where we define
    \begin{equation*}
        \bZ_{(\bQ, \bi), x} \coloneq (\bQ_1, \ldots, \bQ_{\bi - 1}, x, \bQ_{\bi + 1}, \ldots, \bQ_{k/2}).
    \end{equation*}
    First, note that $C_{A, \vec{\bQ}}$ outputs some value in step 1.a) if and only if it finds some $(\bQ, \bi) \in \vec{\bQ}$ such that $\bZ_{(\bQ, \bi), x} \in \Cons(A)$. We will show that this occurs with high probability over $\vec{\bQ}$. Note that drawing a random $(\bQ, \bi) \sim \cD^{k/2} \times [\frac{k}{2}]$ and taking $\bZ_{(\bQ, \bi), x}$ is the same as drawing a random vector from $\cD^{k/2,\, x}$, so that the probability a random $\bZ_{(\bQ, \bi), x}$ is in $\Cons(A)$ is $\cD^{k/2,\, x}(\Cons(A))$. Since $x \notin B$, this is at least $\frac{\mu}{4} \geq \frac{\gamma}{8}$. Moreover, given that $\vec{\bQ}$ is drawn by taking $t$-many independent $(\bQ, \bi) \sim \cQ$, the probability that $E(\vec{\bQ}, x)$ is false over a randomly drawn $\vec{\bQ} \sim \cQ^t$ is at most
    \begin{equation*}
        \Prx_{\vec{\bQ} \sim \cQ^t}[E(\vec{\bQ}, x) \textup{ is false}] = \left(1 - \cD^{k/2, \, x}(\Cons(A))\right)^{t} \leq \left(1 - \frac{\gamma}{8}\right)^{t} \leq \changedthing{\frac{\eps}{16}},
    \end{equation*}
    since we had $t = 64 \cdot \log(\frac{1}{\eps}) / \gamma$. Thus, for any good $x \notin B$, 
    \begin{equation*}
        \Ex_{\vec{\bQ} \sim \cQ^t}\left[\Ind\left[C_{A, \vec{\bQ}}(x) \neq f(x)\right]\right] \leq \changedthing{\frac{\eps}{16}} + \Ex_{\vec{\bQ} \sim \cQ^t}\left[\Ind\left[C_{A, \vec{\bQ}}(x) \neq f(x)\right] \mid E(\vec{\bQ}, x)\right].
    \end{equation*}
    Finally, we will compute this rightmost conditional expectation. For $\vec{\bQ}$, let $(\bQ, \bi)$ be the first pair such that $\bZ_{(\bQ, \bi), x} \in \Cons(A)$. Conditioned on $E(\vec{\bQ}, x)$, such a pair exists. In this case, the probability that $C_{A, \vec{\bQ}}(x) \neq f(x)$ is the probability that $\bi \in \Err(\bZ_{(\bQ, \bi), x})$ conditioned on $\bZ_{(\bQ, \bi), x} \in \Cons(A)$; but this probability is exactly 
    \begin{equation*}
        \Prx_{(\bi, \bX) \sim \cD^{k/2, x}}\left[C(\bX, Y_S)_{\bi} \neq f(\changedthing{x}) \mid \bX \in \Cons(A)\right].
    \end{equation*}
    Having $\xi$ denote the term
    \begin{equation*}
        \xi \coloneqq \Ex_{\bx \sim \cD}\left[ \Ex_{\vec{\bQ} \sim \cQ^t}\left[\Ind\left[C_{A, \vec{\bQ}}(\bx) \neq f(\bx)\right]\right] \cdot \Ind[\bx \notin B] \right], 
    \end{equation*}
    we have that
    \begin{align*}
        \xi & \leq \changedthing{\frac{\eps}{16}} + \Ex_{\bx \sim \cD}\left[\Prx_{(\bi, \bX) \sim \cD^{k/2, x}}\left[C(\bX, Y_S)_{\bi} \neq f(\bx) \mid \bX \in \Cons(A)\right] \cdot \Ind[\bx \notin B]\right], \\
        & \leq \changedthing{\frac{\eps}{16}} + \Ex_{\bx \sim \cD}\left[\Prx_{(\bi, \bX) \sim \cD^{k/2, x}}\left[C(\bX, Y_S)_{\bi} \neq f(\bx) \mid \bX \in \Cons(A)\right]\right] \leq \changedthing{\frac{\eps}{16} + \frac{5\eps}{16}},
    \end{align*}
    where the last inequality comes from applying \Cref{claim:excellent-cons-likely-good}. Substituting this into \Cref{eq:good-advice-good-circuit-1,eq:good-advice-good-circuit-2}, we have that 
    \begin{equation*}
        \Ex_{\vec{\bQ} \sim \cQ^t}\left[\Prx_{\bx \sim \cD}\left[C_{A, \vec{\bQ}}(\bx) \neq f(\bx)\right] \right] = \Ex_{\bx \sim \cD}\left[\Prx_{\vec{\bQ} \sim \cQ^t}\left[C_{A, \vec{\bQ}}(\bx) \neq f(\bx)\right]\right] \leq \changedthing{\frac{\eps}{16} + \frac{\eps}{16} + \frac{5\eps}{16}} \leq \frac{\eps}{2}.
    \end{equation*}
    Applying Markov's inequality, we see that with probability at least $\frac{1}{2}$ over $\vec{\bQ} \sim \cQ^t$, 
    \begin{equation*}
        \Prx_{\bx \sim \cD}\left[C_{A, \vec{\bQ}}(\bx) \neq f(\bx)\right] \leq \eps,
    \end{equation*}
    which completes the proof.
\end{proof}

Next, we prove \Cref{claim:excellent-cons-likely-good}. \\

\begin{proof}
    For any $x \in \cX$, we will define \changedthing{$\Phi, \hat{\Phi} : \cX \rightarrow \R$} as follows,
    \begin{align*}
        \Phi(x) & \coloneqq \Prx_{(\bi, \bX) \sim \cD^{k/2, x}}\left[\changedthing{C(\bX, Y_S)_{\bi} \neq f(x)} \mid \bX \in \Cons(A)\right], \\
        \hat{\Phi}(x) & \coloneqq \Prx_{(\bi, \bX) \sim \cD^{k/2, x}}\left[\changedthing{C(\bX, Y_S)_{\bi} \neq f(x)} \textup{ and } \bX \in \Cons(A)\right].
    \end{align*}
    Then we can rewrite the lefthand side of \Cref{eq:excellence-gives-good-circuit-target} using \Cref{claim:B-small},
    \begin{align}
        \Ex_{\bx \sim \cD}\left[\Phi(\bx)\right] & = \Ex_{\bx \sim \cD}\left[\Phi(\bx) \cdot \Ind[\bx \in B]\right] + \Ex_{\bx \sim \cD}\left[\Phi(\bx) \cdot \Ind[\bx \notin B]\right], \\
        & \leq \changedthing{\frac{\eps}{16}} + \Ex_{\bx \sim \cD}[\Phi(\bx) \cdot \Ind[\bx \notin B]], \\
        & = \changedthing{\frac{\eps}{16}} + \Ex_{\bx \sim \cD}\left[\frac{\hat{\Phi}(\bx)}{\cD^{k/2, \bx}(\Cons(A))} \cdot \Ind[\bx \notin B]\right], \\
        & \leq \changedthing{\frac{\eps}{16}} + \frac{4}{\mu} \cdot \Ex_{\bx \sim \cD}[\hat{\Phi}(\bx)]. \label{eq:excellence-gives-good-circuit-flip-1}
    \end{align}
    Now, we can expand and rearrange this expectation as follows,
    \begin{align*}
        \Ex_{\bx \sim \cD}\left[\hat{\Phi}(x)\right] & = \int_{\cX} \hat{\Phi}(x) \, d\cD(x), \\
        & = \int_{\cX}\left(\frac{2}{k} \cdot \sum_{i = 1}^{\frac{k}{2}} \int_{\cX^{k/2 - 1}}\Ind\left[i \in \Err(Z_{(X, i), x}) \wedge Z_{(X, i), x} \in \Cons(A)\right] \, d \cD^{k/2 - 1}(X)\right) \, d \cD(x), \\
        & = \frac{2}{k} \cdot \sum_{i = 1}^{\frac{k}{2}} \int_{\cX} \int_{\cX^{k / 2 - 1}} \Ind\left[i \in \Err(Z_{(X, i), x}) \wedge Z_{(X, i), x} \in \Cons(A)\right] \, d\cD^{k / 2 - 1}(X) \, d \cD(x), \\
        & = \frac{2}{k} \cdot \sum_{i = 1}^{\frac{k}{2}} \int_{\cX^{k/2}} \Ind\left[i \in \Err(X) \wedge X \in \Cons(A)\right] \, d\cD^{k / 2}(X), \\
        & = \int_{\cX^{k/2}} \left(\frac{2}{k} \cdot \sum_{i = 1}^{\frac{k}{2}} \Ind\left[i \in \Err(X) \wedge X \in \Cons(A)\right] \right) \, d\cD^{k / 2}(X), \\
        & = \int_{\cX^{k/2}} \left(\Ind[X \in \Cons(A)] \cdot \changedthing{2 \cdot \ErrFrac(X)} \right) \, d\cD^{k/2}(X), \\
        & = \changedthing{2 \cdot} \Ex_{\bX \sim \cD^{k/2}}\left[\Ind[\bX \in \Cons(A)] \cdot \ErrFrac(\bX)\right] = \changedthing{2 \cdot \mu} \cdot \Ex_{\bX \sim \cD^{k/2}}\left[\ErrFrac(\bX) \mid \bX \in \Cons(A)\right].
    \end{align*}
    Substituting this into \Cref{eq:excellence-gives-good-circuit-flip-1} and applying the fact that $A$ is excellent, we see that
    \begin{equation*}
        \Ex_{\bx \sim \cD}\left[\Phi(\bx)\right] \leq \changedthing{\frac{\eps}{16} + 8 \cdot \Ex_{\bX \sim \cD^{k/2}}\left[\ErrFrac(\bX) \mid \bX \in \Cons(A)\right] \leq \frac{5\eps}{16}},
    \end{equation*}
    completing the proof.
\end{proof}

Finally, we prove \Cref{claim:B-small}. We do so by noting that it is a nearly direct consequence of the following, more general concentration inequality, which we prove below.

\begin{claim} \label[claim]{claim:concentration-inequality}
    Suppose that $\Omega$ is a set, $d \in \N$, and let $\mu$ be a distribution over $\Omega$. Let $G \subseteq \Omega^d$ have density $\beta \coloneqq \mu^d(G)$. For any $x \in \Omega$ and $i \in [d]$, we define $\rho_i : \Omega \rightarrow [0,1]$ by
    \begin{equation*}
        \rho_i(x) \coloneqq \Prx_{\bX \sim \mu^{(i, x)}}\left[\bX \in G\right], \quad \mu^{(i, x)} \coloneqq \mu  \times \cdots \mu \times \underbrace{\delta_x}_{\textup{$i$-th pos}} \times \, \mu \cdots \times \mu,
    \end{equation*}
    where $\delta_x$ is the point mass distribution at $x$. Additionally, define $\rho : \Omega \rightarrow [0,1]$ to be the average $\rho(x) \coloneqq \frac{1}{d} \cdot \sum_{i = 1}^d \rho_i(x)$. Then
    \begin{equation*}
        \mu\left(\left\{x \in \Omega : \rho(x) < \frac{\beta}{4}\right\}\right) < 16 \cdot \frac{\log \frac{1}{\beta}}{d}.
    \end{equation*}
\end{claim}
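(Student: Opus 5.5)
Let $T := \{x \in \Omega : \rho(x) < \beta/4\}$ and $\tau := \mu(T)$. The plan is to show that a set $G$ of density $\beta$ cannot largely avoid product configurations in which many coordinates lie in $T$, with a KL-divergence / chain-rule argument. Consider the product distribution $\mu^d$ conditioned on $G$; call this distribution $\nu$ on $\Omega^d$. Then
\[
\DKL(\nu \,\|\, \mu^d) = \log\tfrac{1}{\beta}.
\]
By subadditivity of KL against a product measure (the chain rule together with the fact that conditioning on other coordinates only increases divergence), we get $\sum_{i=1}^d \DKL(\nu_i \,\|\, \mu) \le \log\frac{1}{\beta}$, where $\nu_i$ is the $i$-th marginal of $\nu$. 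Note that $\nu_i$ has density $\rho_i(x)/\beta$ with respect to $\mu$, so $\nu_i(T) = \frac{1}{\beta}\int_T \rho_i \, d\mu$.

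The next step is to average over coordinates. Let $\bar\nu := \frac{1}{d}\sum_i \nu_i$, which has density $\rho(x)/\beta$. By convexity of KL, $\DKL(\bar\nu \,\|\, \mu) \le \frac{1}{d}\log\frac{1}{\beta}$. On the other hand, $\bar\nu(T) = \frac{1}{\beta}\int_T \rho\, d\mu < \tau/4$, since $\rho < \beta/4$ on $T$. The data-processing inequality applied to the indicator of $T$ then gives
\[
\DKL\bigl(\mathrm{Ber}(\bar\nu(T)) \,\|\, \mathrm{Ber}(\tau)\bigr) \le \tfrac{1}{d}\log\tfrac{1}{\beta}.
\]

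It remains to lower bound the binary divergence when $p := \bar\nu(T) \le \tau/4$. The relevant function is $p\log\frac{p}{\tau} + (1-p)\log\frac{1-p}{1-\tau}$, which is decreasing in $p$ on $[0,\tau]$, so it is at least its value at $p = \tau/4$. A standard estimate ($\DKL(\mathrm{Ber}(p)\|\mathrm{Ber}(q)) \ge (q-p)^2/(2q)$ for $p\le q$) gives at least $(3\tau/4)^2/(2\tau) = 9\tau/32 > \tau/16$. Hence $\tau/16 < \frac{1}{d}\log\frac{1}{\beta}$, i.e.\ $\tau < 16\log(\frac{1}{\beta})/d$, as claimed. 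The cases $\beta = 0$ or $\tau = 0$ are trivial, and the base of the logarithm is irrelevant given the slack in the constants.

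The main obstacle is the subadditivity step $\sum_i \DKL(\nu_i\|\mu) \le \DKL(\nu\|\mu^d)$ on a general (possibly continuous) measurable space. I would justify it via the chain rule for KL divergence, $\DKL(\nu\|\mu^d) = \sum_i \mathbb{E}\bigl[\DKL(\nu_{i\mid <i} \,\|\, \mu)\bigr]$, followed by convexity, which gives $\mathbb{E}[\DKL(\nu_{i\mid<i}\|\mu)] \ge \DKL(\nu_i\|\mu)$. Alternatively, one can use Shearer/Han-type entropy inequalities. Both are standard in measure-theoretic form, since $\nu \ll \mu^d$ with bounded density $1_G/\beta$.
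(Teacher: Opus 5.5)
Your proof is correct, and it uses the same main ingredients as the paper: $\DKL(\nu\,\|\,\mu^d)=\log\frac{1}{\beta}$, subadditivity of KL against the product measure, the fact that the $i$-th marginal has density $\rho_i/\beta$, and a lower bound on a binary divergence. The difference is in how the bad set $T$ is handled.

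The paper works coordinate by coordinate. A Markov-type step ($\rho(x)<\beta/4$ forces $\rho_i(x)<\beta/2$ for at least half the indices) gives $\mu(T)\le\frac{2}{d}\sum_i\mu(B_i)$ with $B_i=\{\rho_i<\beta/2\}$. Then the log-sum inequality applied to each marginal gives $\mu(B_i)\le 8\,\DKL(\nu_i\,\|\,\mu)$, and the paper sums.

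You instead mix the marginals first and use convexity of KL in its first argument. The averaged marginal $\bar\nu$, with density $\rho/\beta$, then already has divergence at most $\frac{1}{d}\log\frac{1}{\beta}$, and a single data-processing step on $T$ finishes. This removes the halving step and the per-coordinate sets. Combined with the Chernoff-type estimate $\DKL(\mathrm{Ber}(p)\,\|\,\mathrm{Ber}(q))\ge (q-p)^2/(2q)$, it gives $\tau\le\frac{32}{9}\cdot\frac{\ln(1/\beta)}{d}$ in any log base, a better constant than $16$. The paper's route, in exchange, establishes along the way the per-coordinate bound $\frac{1}{d}\sum_i\mu(\{\rho_i<\beta/2\})\le 8\log(\frac{1}{\beta})/d$, which carries a bit more information than the claim itself.

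One small point: the case $\tau=0$ is trivial only when $\beta<1$. If $\beta=1$, both sides of the claimed strict inequality are $0$, so the statement should be read with $\le$, which is also all the paper's own chain of inequalities delivers.
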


\begin{proof}
    First, we can rewrite the lefthand side of our target inequality as 
    \begin{align*}
        \mu\left(\left\{x \in \Omega : \rho(x) < \frac{\beta}{4}\right\}\right) & = \Ex_{\bx \sim \mu}\left[\Ind\left[\frac{1}{d} \cdot \sum_{i = 1}^d \rho_i(\bx) < \frac{\beta}{4}\right]\right].
    \end{align*}
    Note that for any $x \in \Omega$, if $\rho(x) < \frac{\beta}{4}$, then $\rho_i(x) < \frac{\beta}{2}$ for at least half of $i \in [d]$; thus, we have that 
    \begin{equation*}
        \Ind\left[\frac{1}{d} \cdot \sum_{i = 1}^d \rho_i(x) < \frac{\beta}{4}\right] \leq \frac{2}{d} \cdot \sum_{i = 1}^d \Ind\left[\rho_i(x) < \frac{\beta}{2}\right].
    \end{equation*}
    We can then combine the two equations above as follows,
    \begin{align}
        \mu\left(\left\{x \in \Omega : \rho(x) < \frac{\beta}{4}\right\}\right) & \leq \Ex_{\bx \sim \mu} \left[\frac{2}{d} \cdot \sum_{i = 1}^d \Ind\left[\rho_i(\bx) < \frac{\beta}{2}\right]\right], \\
        & = \frac{2}{d} \cdot \sum_{i = 1}^{d} \Ex_{\bx \sim \mu} \left[\Ind\left[\rho_i(\bx) < \frac{\beta}{2}\right]\right], \\
        & = \frac{2}{d} \cdot \sum_{i = 1}^d \mu\left(\left\{x \in \Omega : \rho_i(x) < \frac{\beta}{2}\right\}\right). \label{eq:concentr-ineq-targ-1}
    \end{align}
    For the sake of brevity, for each $i \in [d]$, we will define $B_i \subseteq \Omega$ by
    \begin{equation*}
        B_i \coloneqq \left\{x \in \Omega : \rho_i(x) < \frac{\beta}{2}\right\},
    \end{equation*}
    so that the righthand side of \Cref{eq:concentr-ineq-targ-1} is equal to $\frac{2}{d} \cdot \sum_{i = 1}^d \mu(B_i)$. Our goal is now to bound this sum; to do so, we will use the following auxillary distributions in our analysis. Let $\cG$ be the distribution induced by $\mu^d$ on $G$. Viewing this as a distribution over $\Omega^d$, we can also consider its marginal distributions $\cG_1, \ldots, \cG_d$ over $\Omega$. We proceed by bounding the KL-divergence of $\cG$ and $\mu^d$ on both sides; on one hand, we see that 
    \begin{equation*}
        \DKL(\cG \, \| \, \mu^d) = \Ex_{\bX \sim \cG}\left[\log\left(\frac{d\cG}{d\mu^d}(\bX)\right)\right] = \Ex_{\bX \sim \cG}\left[\log \left(\tfrac{1}{\beta}\right)\right] = \log \tfrac{1}{\beta}.
    \end{equation*}
    On the other hand, by the sub-additivity of KL-divergence from a product distribution, we have 
    \begin{equation*}
        \DKL(\cG \, \| \, \mu^d) \geq \sum_{i = 1}^d \DKL(\cG_i \,\|\, \mu). 
    \end{equation*}
    Combining the two and dividing by a factor of $d$, we have that 
    \begin{equation}
        \frac{1}{d} \sum_{i = 1}^d \DKL(\cG_i \,\|\, \mu) \leq \frac{\log \frac{1}{\beta}}{d}. \label{eq:concentr-ineq-total-dkl-bound}
    \end{equation}
    Next, we will bound $\mu(B_i)$ by $\DKL(\cG_i \,\|\, \mu)$. To do so, we can expand $\DKL(\cG_i \,\|\, \mu)$ as follows,
    \begin{align*}
        \DKL(\cG_i \,\|\, \mu) & = \Ex_{\bx \sim \cG_i}\left[\log\left(\frac{d\cG_i}{d\mu}(\bx)\right)\right], \\
        & = \Ex_{\bx \sim \cG_i}\left[\log\left(\frac{d\cG_i}{d\mu}(\bx)\right) \cdot \Ind[\bx \in B_i]\right] + \Ex_{\bx \sim \cG_i}\left[\log\left(\frac{d\cG_i}{d\mu}(\bx)\right) \cdot \Ind[\bx \notin B_i]\right], \\
        & \geq \cG_i(B_i) \cdot \log\left(\frac{\cG_i(B_i)}{\mu(B_i)}\right) + (1 - \cG_i(B_i)) \cdot \log\left(\frac{1 - \cG_i(B_i)}{1 - \mu(B_i)}\right),
    \end{align*}
    where the last inequality comes from applying the log sum inequality to each expectation. We will denote $\delta_i \coloneqq \mu(B_i)$ and define $g_{\delta_i} : [0,1] \rightarrow \R$ by 
    \begin{equation*}
        g(x) \coloneq x \cdot \log\left(\frac{x}{\delta_i}\right) + (1 - x) \cdot \log\left(\frac{1 - x}{1 - \delta_i}\right).
    \end{equation*}
    With this, we can rewrite the preceding inequality as
    \begin{equation}
        \DKL(\cG_i \,\|\, \mu) \geq g_{\delta_i}(\cG_i(B_i)). \label{eq:concentr-ineq-dkl-i-bound}
    \end{equation}
    To further bound this, we will use the following bound on $\cG_i(B_i)$ in terms of $\mu(B_i)$. To do so, we first note that, using Bayes' theorem,\footnote{Although the equation is stated using conditional probabilities which may be $0$ for continuous distributions, we note that the same end-to-end equality holds by replacing those conditional probabilities with conditional densities.}
    \begin{equation*}
        \frac{d\cG_i}{d\mu}(x) = \frac{\Pr_{\bX \sim \mu^d}[\bX_i = x \mid \bX \in \cG]}{\mu(x)} = \frac{\Pr_{\bX \sim \changedthing{\mu^{(i, x)}}}[\bX \in G] \cdot \mu(x)}{\mu(x) \cdot \mu^d(G)} = \frac{\rho_i(x)}{\beta}. 
    \end{equation*}
    Thus, we see that 
    \begin{align*}
        \cG_i(B_i) & = \int_{\Omega}\Ind[x \in B_i] \, d\cG_i(x) = \int_{\Omega} \Ind[x \in B_i] \cdot \frac{\rho_i(x)}{\beta} \, d\mu(x) \leq \frac{1}{2} \cdot \int_{\Omega} \Ind[x \in B_i] \, d\mu(x) = \frac{\mu(B_i)}{2}.
    \end{align*}
    Thus, $\cG_i(B_i) \in [0, \frac{\delta_i}{2}]$, and so we can bound \Cref{eq:concentr-ineq-dkl-i-bound} below by
    \begin{equation*}
        \DKL(\cG_i \, \| \, \mu) \geq g_{\delta_i}(\cG_i(B_i)) \geq \min_{x \in [0, \frac{\delta_i}{2}]} g_{\delta_i}(x).
    \end{equation*}
    Now, note that the derivative of $g_{\delta_i}(x)$ with respect to $x$ is 
    \begin{equation*}
        \frac{\partial}{\partial x} g_{\delta_i}(x) = \log\left(\frac{x}{1 - x} \cdot \frac{1-\delta_i}{\delta_i}\right),
    \end{equation*}
    which, since $\frac{x}{1 - x} \leq \frac{\delta_i}{1 - \delta_i}$ for $x \leq \frac{\delta_i}{2}$, is always negative on $[0, \frac{\delta_i}{2}]$. Thus $g_{\delta_i}$ is minimized at the boundary of this interval, and so we know that
    \begin{align*}
        \DKL(\cG_i \,\|\, \mu) \geq \min_{x \in [0, \frac{\delta_i}{2}]} g_{\delta_i}(x) & \geq g_{\delta_i}\left(\frac{\delta_i}{2}\right), \\
        & = \frac{\delta_i}{2} \cdot \log\left(\frac{1}{2}\right) + \left(1 - \frac{\delta_i}{2}\right) \cdot \log\left(\frac{1 - \frac{\delta_i}{2}}{1 - \delta_i}\right), \\
        & = \frac{\delta_i}{2} \cdot \log\left(\frac{1}{2}\right) + \left(1 - \frac{\delta_i}{2}\right) \cdot \log\left(1 + \frac{\delta_i}{2 \cdot (1 - \delta_i)}\right), \\
        & \geq \frac{\delta_i}{2} \cdot \log\left(\frac{1}{2}\right) + \left(1 - \frac{\delta_i}{2}\right) \cdot \left(\frac{\delta_i}{2\cdot(1 - \delta_i)} \cdot \frac{1}{1 + \frac{\delta_i}{2(1-\delta_i)}} \cdot \frac{1}{\ln(2)}\right), \\
        & = \frac{\delta_i}{2} \cdot \log\left(\frac{1}{2}\right) + \left(1 - \frac{\delta_i}{2}\right) \cdot \left(\frac{\delta_i}{2} \cdot \frac{1}{1 - \frac{\delta_i}{2}} \cdot \frac{1}{\ln(2)}\right), \\
        & = \frac{\delta_i}{2} \cdot \left(\frac{1}{\ln(2)} - 1\right) \geq \frac{\delta_i}{8}.
    \end{align*}
    Combining this with \Cref{eq:concentr-ineq-targ-1,eq:concentr-ineq-total-dkl-bound} and recalling that $\delta_i = \mu(B_i)$, we have that 
    \begin{equation*}
        \mu\left(\left\{x \in \Omega : \rho(x) < \frac{\beta}{4}\right\}\right) \leq \frac{2}{d} \cdot \sum_{i = 1}^d \delta_i \leq \frac{16}{d} \cdot \sum_{i = 1}^d \DKL(\cG_i \,\|\, \mu) \leq 16 \cdot \frac{\log \frac{1}{\beta}}{d},
    \end{equation*}
    which completes the proof.
\end{proof}

\subsubsection{Proof of \Cref{lem:lots-of-good-advice}}

In this section, we prove that a random piece of advice is excellent with probability at least $\frac{\gamma}{4}$.\\

\begin{proof}
    First, we will bound the probability that a random piece of advice $\bA \sim \cA$ is good. Since $C$ computes $f^k$ to accuracy $\geq \gamma$, we know that a random $\bA \sim \cA$ is correct with probability at least $\gamma$. On the other hand, the probability that a random $\bA \sim \cA$ is correct but not good is
    \begin{align*}
        \Prx_{\bA \sim \cA}[\bA \textup{ is correct but not good}] & = \Prx_{\bA \sim \cA}\left[\bY \textup{ is correct } \wedge \,\, \ConsFrac(\bA) < \tfrac{\gamma}{2}\right].
    \end{align*}
    While we had $\cA$ denote the distribution on advice which first samples $\bY \sim \cD^k$ then $\bS \sim \binom{k}{k/2}$ independently, it's equivalent to first sample $\bS \sim \binom{k}{k/2}$, then draw $\bY_1, \bY_2 \sim \cD^{k/2}$ independently, and finally set $\bY \coloneqq \ileave_{\bS}(\bY_1, \bY_2)$. With this view, we have
    \begin{align*}
        \Prx_{\bA \sim \cA}\left[\bY \textup{ correct } \wedge \,\, \cN_{\bA}(\Cons(\bA)) < \tfrac{\gamma}{2}\right] & = \Prx_{\bS, \bY_1, \bY_2}\left[\bY \textup{ correct } \wedge \ConsFrac(\bY_1, \bS) < \tfrac{\gamma}{2}\right] \\
        & \leq \Prx_{\bS, \bY_1, \bY_2}\left[\bY \textup{ correct} \mid \ConsFrac(\bY_1, \bS) < \tfrac{\gamma}{2}\right].
    \end{align*}
    Conditioned on $\bS$ and $\bY_1$, $\bY \coloneq \ileave_{\bS}(\bY_1, \bY_2)$ is distributed according to $\cN_{(\bY_1, \bS)}$. Moreover, $\bY$ is correct only if $\bY \in \Cons(\bY_1, \bS)$. Conditioned on $\ConsFrac(\bY_1, \bS) < \frac{\gamma}{2}$, this occurs with probability $\leq \frac{\gamma}{2}$. Thus, the righthand side of the above equation is at most $\frac{\gamma}{2}$, so that 
    \begin{equation}
        \Prx_{\bA \sim \cA}[\bA \textup{ is good}] \geq \Prx_{\bA \sim \cA}[\bA \textup{ is correct}] - \Prx_{\bA \sim \cA}[\bA \textup{ is correct but not good}] \geq \gamma - \tfrac{\gamma}{2} \geq \tfrac{\gamma}{2}. \label{eq:many-excellent-good-bound}
    \end{equation}
    To bound the probability that a random $\bA \sim \cA$ is excellent, we will bound the probability that $\bA$ is good but not excellent. Let $E_0(A)$ be the event that $A$ is good, and have $E_1(A)$ represent the event that $A$ is good but not excellent. We will write $\alpha \coloneqq \frac{\eps}{32}$. Note that if $A$ is good but not excellent, then 
    \begin{equation*}
        \Ex_{\bX \sim \cN_A}\left[\ErrFrac(\bX) \mid \bX \in \Cons(\bA)\right] \geq \alpha,
    \end{equation*}
    which, in particular, implies that 
    \begin{equation*}
        \Ex_{\bX \sim \cN_A}\left[\ErrFrac(\bX) \cdot \Ind\left[\ErrFrac(\bX) \geq \tfrac{\alpha}{2}\right]\mid \bX \in \Cons(\bA)\right] \geq \tfrac{\alpha}{2}.
    \end{equation*}
    We can use this to bound the probability that an \changedthing{advice pair} is good but not excellent as follows,
    \begin{align*}
        \Prx_{\bA \sim \cA}[E_1(\bA)] & = \Ex_{\bA \sim \cA}\left[\Ind\left[E_0(\bA) \text{ but} \Ex_{\bX \sim \cN_{\bA}}\left[\ErrFrac(\bX) \mid \bX \in \Cons(\bA)\right] \geq \alpha \right]\right], \\
        & \leq \Ex_{\bA \sim \cA}\left[\Ind\left[E_0(\bA) \text{ but} \Ex_{\bX \sim \cN_{\bA}}\left[\ErrFrac(\bX) \cdot \Ind\left[\ErrFrac(\bX) \geq \tfrac{\alpha}{2}\right]\mid \bX \in \Cons(\bA)\right] \geq \tfrac{\alpha}{2}\right]\right],\\
        & \leq \frac{2}{\alpha} \cdot \Ex_{\bA \sim \cA}\left[\Ind[E_0(\bA)] \cdot \Ex_{\bX \sim \cN_{\bA}}\left[\ErrFrac(\bX)\cdot\Ind\left[\ErrFrac(\bX) \geq \tfrac{\alpha}{2}\right]\mid \Cons(\bA)\right]\right], \\
        & = \frac{2}{\alpha} \cdot \Ex_{\bA \sim \cA}\left[\frac{\Ind[E_0(\bA)]}{\ConsFrac(\bA)} \cdot \Ex_{\bX \sim \cN_{\bA}}\left[\ErrFrac(\bX) \cdot \Ind\left[\bX \in \Cons(\bA) \wedge\ErrFrac(\bX) \geq \tfrac{\alpha}{2}\right]\right]\right], \\
        & \leq \frac{4}{\alpha \cdot \gamma} \cdot \Ex_{\bA \sim \cA}\left[\Ex_{\bX \sim \cN_{\bA}}\left[\Ind\left[E_0(\bA) \wedge \bX \in \Cons(\bA) \wedge \ErrFrac(\bX) \geq \tfrac{\alpha}{2}\right] \cdot \ErrFrac(\bX)\right]\right], \\
        & \leq \frac{4}{\alpha \cdot \gamma} \cdot \Prx_{\bA \sim \cA,\, \bX \sim \cN_{\bA}, \bi \sim [k]}\left[E_0(\bA) \wedge \bX \in \Cons(\bA) \wedge \ErrFrac(\bX) \geq \tfrac{\alpha}{2} \wedge \bi \in \Err(\bX)\right], \\
        & \leq \frac{4}{\alpha \cdot \gamma} \cdot \underbrace{\Prx_{\bA \sim \cA,\, \bX \sim \cN_{\bA}, \bi \sim [k]}\left[\bA \text{ is correct}\wedge \bX \in \Cons(\bA) \wedge \ErrFrac(\bX) \geq \tfrac{\alpha}{2} \wedge \bi \in \Err(\bX)\right]}_{\xi}.
    \end{align*}
    We will now reinterpret this last probability as follows. Rather than drawing $\bA \sim \cA$ then $\bX \sim \cN_{\bA}$, we could instead do the following: draw $\bY \sim \cD^k$, a random subset $\bS \sim \binom{k}{k/2}$, and a random $\bZ \sim \cD^{k/2}$, then set $\bX = \bY$, and $\bA = (\ileave_{\bS}(\bY_{\bS}, \bZ), \bS)$. In this case, if $\bA$ is correct and $\bX \in \Cons(\bA)$, then $\bS$ must be disjoint from $\Err(\bX)$; thus, we can bound this probability as 
    \begin{align*}
        \xi & \leq \Prx_{\bY \sim \cD^k, \, \bS \sim \binom{k}{k/2}, \, \bZ \sim \cD^{k/2}, \bi \sim [k]}\left[\bS \cap \Err(\bY) = \emptyset \, \wedge \, \ErrFrac(\bY) \geq \tfrac{\alpha}{2} \wedge \bi \in \Err(\bY)\right], \\
        & = \Ex_{\bY \sim \cD^k}\left[\Prx_{\bS \sim \binom{k}{k/2}, \bZ \sim \cD^{k / 2}}[\changedthing{\bS} \cap \Err(\bY) = \emptyset \wedge \ErrFrac(\bY) \geq \tfrac{\alpha}{2}] \cdot \ErrFrac(\bY)\right], \\
        & = \Ex_{\bY \sim \cD^k}\left[\Prx_{\bS \sim \binom{k}{k/2}, \bZ \sim \cD^{k / 2}}[\changedthing{\bS} \cap \Err(\bY) = \emptyset \mid \ErrFrac(\bY) \geq \tfrac{\alpha}{2}]  \cdot \Ind[\ErrFrac(\bY) \geq \tfrac{\alpha}{2}] \cdot \ErrFrac(\bY)\right], \\
        & \leq \Ex_{\bY \sim \cD^k}\left[\changedthing{\exp\left(-\ErrFrac(\bY) \cdot k / 2\right)}  \cdot \Ind[\ErrFrac(\bY) \geq \tfrac{\alpha}{2}] \cdot \ErrFrac(\bY)\right], \\
        &  \leq \Ex_{\bY \sim \cD^k}\left[\frac{\alpha \cdot \gamma^2}{16}\right] \leq \frac{\alpha \cdot \gamma^2}{16}.
    \end{align*}
    Note that the second to last inequality came from the fact that $\ErrFrac(Y) \geq \frac{\alpha}{2}$ and, for any $\rho \in (\alpha / 2, 1]$, \changedthing{$\rho \cdot e^{-k\rho/2} \leq \alpha \cdot \gamma^2 / 16$}. Thus, we have that 
    \begin{equation*}
        \Prx_{\bA \sim \cA}[\bA \textup{ is good but not excellent}] \leq \frac{4}{\alpha \cdot \gamma} \cdot  \frac{\alpha \gamma^2}{16} \leq \frac{\gamma}{4}.
    \end{equation*}
    Combining this with \Cref{eq:many-excellent-good-bound}, we can complete the proof, 
    \begin{equation*}
        \Prx_{\bA \sim \cA}[\bA \textup{ is excellent}] \geq \Prx_{\bA \sim \cA}[\bA \textup{ is good}] - \Prx_{\bA \sim \cA}[\bA \textup{ is good but not excellent}] \geq \tfrac{\gamma}{2} - \tfrac{\gamma}{4} = \tfrac{\gamma}{4},
    \end{equation*}
    which completes the proof.
\end{proof}

\section*{Acknowledgments}

We thank the COLT reviewers for helpful feedback and suggestions. The authors are supported by NSF awards 1942123, 2211237, 2224246, a Sloan Research Fellowship, and Omer Reingold's Simons Foundation investigators award.

\bibliographystyle{alpha}
\bibliography{ref}

\end{document}